\newcommand{\Z}{\mathbb{Z}}
\newcommand{\p}[1]{\ensuremath{\left(#1\right)}}
\newcommand{\N}{\mathbb{N}}
\newcommand{\Ev}[1]{\mathbb{E}\left[#1\right]}
\newcommand{\Evv}[2]{\mathop{\mathbb{E}}_{#1}\left[#2\right]}
\newcommand{\logp}[1]{\log\p{#1}}
\DeclareMathOperator*{\argmax}{arg\,max}
\DeclareMathOperator*{\argmaxB}{arg\,max\, B}
\newtheorem{prop}{Proposition}
\newtheorem{theorem}{Theorem}[section]
\newtheorem{corollary}{Corollary}[theorem]
\newtheorem{lemma}[theorem]{Lemma}
\newtheorem{assumption}{Assumption}[section]
\title{On Next-Token Prediction in LLMs:\\
How End Goals Determine the Consistency of\\
Decoding Algorithms}
\author{ Jacob Trauger  \\Department of Statistics \\ University of Michigan \\ jtrauger@umich.edu \And Ambuj Tewari  \\ Department of Statistics \\ University of Michigan \\tewaria@umich.edu}
\begin{document}
\maketitle

\begin{abstract}
Probabilistic next-token prediction trained using cross-entropy loss is the basis of most large language models. Given a sequence of previous values, next-token prediction assigns a probability to each possible next value in the vocabulary. There are many ways to use next-token prediction to output token sequences. This paper examines a few of these algorithms (greedy, lookahead, random sampling, and temperature-scaled random sampling) and studies their consistency with respect to various goals encoded as loss functions. Although consistency of surrogate losses with respect to a target loss function is a well researched topic, we are the first to study it in the context of LLMs (to the best of our knowledge). We find that, so long as next-token prediction converges to its true probability distribution, random sampling is consistent with outputting sequences that mimic sampling from the true probability distribution. For the other goals, such as minimizing the 0-1 loss on the entire sequence, we show no polynomial-time algorithm is optimal for all probability distributions and all decoding algorithms studied are only optimal for a subset of probability distributions. When analyzing these results, we see that there is a dichotomy created between the goals of information retrieval and creative generation for the decoding algorithms. This shows that choosing the correct decoding algorithm based on the desired goal is extremely important and many of the ones used are lacking theoretical grounding in numerous scenarios.
\end{abstract}


\section{Introduction}


The basis for nearly all large language models today is next-token prediction trained by minimizing the cross entropy loss function \citep{radford2018improving, brown2020language, chowdhery2023palm, touvron2023llama, devlin2019bert}. However, next-token prediction only gives probabilities for the next token. Many tasks, such as machine translation or text generation, require an output of a token {\em sequence}. Thus, we must have some decoding algorithm that takes these next-token predictions and outputs a sequence. With a large amount of test-time computation being used in state-of-the-art models \citep{jaech2024openai, guo2025deepseek}, the mathematical foundations of these algorithms are of great interest.

In this paper, we analyze the behavior of next-token prediction and how the choice of decoding algorithms can impact the asymptotic optimality of the outputs. This work can be thought of as studying {\em surrogate loss consistency}: we minimize a surrogate loss function (cross entropy on next-token prediction), but are interested in a different target loss function (e.g., Hamming loss between predicted and correct sequence). This notion of consistency has been extensively studied machine learning. \citet{Bartlett_Jordan_McAuliffe_2006} showed results for binary classification where they minimize a surrogate loss function and show consistency with respect to the 0-1 loss. \citet{JMLR:v8:tewari07a} extend this approach to multi-class classification. \citet{Gao_Zhou_2011a, koyejo2015consistent, Wu_Zhu_2020} all have worked on consistency for multi-label classification.

There has also been research into next-token prediction and decoding algorithms. \citet{saunshi2021a} investigates how linearly transforming next word prediction can predict text classification. \citet{li2024mechanics} studies what can be learned by a single attention layer for next-token prediction. \citet{snell2024scaling, wiher2022decoding, shi2024thorough} all investigate a few types of decoding algorithms and empirically evaluate them. There has also been much research on how next-token prediction learns \citep{bachmann2024pitfalls, lin2025reasoning, thrampoulidis2024implicit}, but, as far as we are aware, we are the first to investigate the consistency of next-token prediction and these decoding algorithms.

It is standard to analyze consistency in an asymptotic setting of sufficiently large sample sizes and models where the surrogate loss has been fully minimized. We therefore assume that our next-token predictor converges to the true next-token distribution and we only have query access to it. Note that this emulates the training of our next-token predictor as asymptotic minimization of cross-entropy results in correct next-token distributions. Given this, we investigate our decoding algorithms with regards to two high-level goals central to how large language models are used today: {\em information retrieval}, where the user is looking for a ``correct answer'' and {\em creative generation}, where the user is looking  for new samples from the distribution of human language \citep{paass2023foundation, petroni2019language, brown2020language}.
We do this by minimizing a loss function that acts as a proxy for these goals: the N-gram Hamming loss for correct information retrieval and the cross entropy loss for the entire sequence for generating samples.

This paper gives a framework to study the consistency of various decoding algorithms with respect to different high level goals. We show that only {\em deterministic} decoding algorithms can be consistent for the N-gram Hamming loss, however, these have infinite loss for the cross entropy loss for any non-deterministic true sequence output distribution. Therefore, {\em stochastic} decoders are necessary to have a non-infinite loss for the cross entropy objective, but fail at consistency for the N-gram Hamming loss. This shows there is no one-size-fits-all decoding algorithm and one must adapt their decoder to their desired goal. We also show that there is no consistent polynomial-time decoding algorithm for all output distributions for the N-gram Hamming loss and we give a characterization of when the polynomial-time algorithms studied in this paper are consistent for a probability distribution. For the cross entropy loss, we show random sampling is consistent for all probability distributions over the sequence outputs. Finally, we give a rate for the suboptimality gap of the expected risk for temperature scaling with respect to the temperature parameter.

This paper is organized as follows: Section \ref{background} goes over the requisite background and notation needed for this paper, including the decoding algorithms studied in this paper. Section \ref{setup} discusses the problem set up. Section \ref{n-gram Hamming section} goes over the case where the goal is information retrieval. Finally, section \ref{CE section} shows results for when the goal is creative generation.
\section{Notation and background}\label{background}
For notation, we will use $\mathcal{Y}$ as an output space, $\mathcal{X}$ as an input space, and $\ell$ as a loss function. Since the outputs are sequences, $y \in \mathcal{Y}$ will refer to the entire sequence, $y_i$ will refer to index $i$ in the sequence, and $y_{[i]}$ will refer to the subsequence from indices $1$ to $i$. In general, $[j]$ is the ordered set $(1,2,\dots, j)$ and $y_{i:j} = y_iy_{i+1}\dots y_j$. For two strings, the $+$ operator will mean concatenation.
For probabilities, when one sees $p(v \mid y_{[i-1]})$, this is the conditional probability of the $v$ at index $i$ given the sequence $y_{[i-1]}$. We will often write this as $p(y_i \mid y_{[i-1]})$. To keep consistent with this notation, we will write $p(y_i)$ as the marginal of the distribution at index $i$ for the token $y$.

\subsection{Expected risk}
Given a loss function $\ell$, a probability distribution $p$ over inputs $\mathcal{X}$ and outputs $\mathcal{Y}$,  and a hypothesis $h: \mathcal{X} \rightarrow \mathcal{Y}$, the expected risk is normally defined as follows:
\[R(h, p, \ell) = \Evv{(x, y) \sim p}{\ell(h(x), y)}.\]
This value is oftentimes what is trying to be minimized when training a machine learning algorithm \citep{shalev2014understanding, Bartlett_Jordan_McAuliffe_2006, JMLR:v8:tewari07a, Gao_Zhou_2011a}, however, this set up does not have the required granularity needed for our purposes. There has been work that has modified the expected risk so that it can fit their use cases, such as needing a ``pred'' function to convert the output of their algorithm into a prediction \citep{JMLR:v8:tewari07a, ramaswamy2013convex, ramaswamy2016convex}. We will then also reformulate the expected risk so that it fits to our problem. 

We assume we have access to a next-token predictor, which can only output conditional probabilities for the next token given the previous tokens and input. The notation for this will be $p_{ntp}(y_i \mid y_{[i-1]})$ for every $y_i$ in our vocabulary. These conditional distributions naturally induce a unique probability distribution on the entire sequence, thus we will interchangeably refer to $p_{ntp}$ as probability distribution itself. A $ntp$ subscript on a probability distribution is used to emphasize that the distribution is being used as a next-token predictor. Given $p_{ntp}$, we require a decoding algorithm $\mathcal{D}$, which takes in an input $x \in \mathcal{X}$ and a next-token predictor $p_{ntp}$ ($\mathcal{D}$ only has access to conditional distributions of $p_{ntp}$) and uses them to output a sequence $\hat{y} \in \mathcal{Y}$. Since $\mathcal{D}$ can have internal randomness, we will refer to the distribution of outputs produced by $\mathcal{D}$ as $p_{\mathcal{D}( p_{ntp})\mid x}$. We will then define our expected risk as follows:
\[R(\mathcal{D}, p, p_{ntp}, \ell) = \Evv{x \sim p_x}{\Evv{y \sim p \mid x, \hat{y} \sim p_{\mathcal{D}( p_{ntp})\mid x}}{\ell(\hat{y}, y)}},\]
where $p\mid x$ is the conditional distribution of the output $y$ given an input $x$. 

\subsection{Decoders}
We will look at 3 types of decoding algorithms: $K_T$-lookahead decoding, random sampling, and temperature-scaled random sampling.
\subsubsection{$K_T$-lookahead}
The word ``lookahead'' has been used in a few different ways in LLM decoding \citep{snell2024scaling, fu2024break}. Here, we will define the $K_T$-lookahead algorithm as a generalization of the well-known ``greedy'' decoding algorithm \citep{shi2024thorough, wiher2022decoding}. For choosing the next token(s), we will find all $K$-length combinations of our tokens and then keep the first $T$ tokens of the maximum $K$-length sequence.

\begin{algorithm}
\caption{$K_T$-lookahead}\label{alg:cap}
\begin{algorithmic}
\Require $L \in \N$, $K \leq L$, $T \leq K$,  $p_{ntp}(\cdot \mid \cdot)$, Vocabulary $\mathcal{V}$
\State $y \gets``''$
\While{$\text{length}(y) < L$}
    \State $c \gets \max\{K, L-\text{length($y$)}\}$ 
    \State $y' = \argmax_{v_1,\dots,v_{c} \in \mathcal{V}}\{p_{ntp}(v_1\mid y)p_{ntp}(v_2\mid y + v_1) \dots p_{ntp}(v_c\mid y + v_{1:c-1})\}$
    \State $H \gets \min\{T, L-\text{length($y$)}\}$ 
    \State $y \gets y + y'_{[H]}$
\EndWhile
\end{algorithmic}
\end{algorithm}
We note that $K=T=1$ is greedy decoding.

\subsubsection{Random sampling}
Since next-token prediction outputs probabilities, random sampling will choose the next token given these conditional probabilities. The algorithm can be seen in Appendix \ref{rs algo}.
\subsubsection{Temperature-scaled random sampling}
Temperature scaling is where one scales the next-token probabilities to encourage or discourage exploration of the space. It is used in almost all, if not all, large language models \citep{brown2020language, achiam2023gpt, chowdhery2023palm, touvron2023llama}. 

Normally the probabilities are found by using a softmax on logits $z_i$. Temperature scaling is then done by using softmax on $z_i/T$, where $T$ is the temperature parameter. However, using only the probabilities themselves, we can do temperature scaling using temperature $\gamma$ as:

\[p_{scaled}(y_i \mid y_{[i-1]},x) = \frac{p(y_i\mid y,x)^\gamma}{\sum_{v \in \mathcal{V}}p(v\mid y,x)^\gamma}.\]
We show the equivalence in Appendix \ref{temp scaling}.

For this decoding algorithm, we will be randomly sampling the next token from the temperature-scaled distribution. This algorithm can be seen in Appendix \ref{tsrs algo}

\section{Problem setup}\label{setup}
Let $\mathcal{X}$ be a general input space. Let $\mathcal{V}$ be a vocabulary, $*$ be a null character, and let $L \in \N$. Then, let $\mathcal{Y} \subseteq \{y_1y_2\dots y_j\underbrace{***}_{L-j \text{ indices}} | y_i \in \mathcal{V}, j\leq L \}$ be our sequence output space. Each $y \in \mathcal{Y}$ is thus a sequence padded to a finite maximum length using the null character. We will also assume for any next-token predictor, if the current string has $*$ in it, all mass for the next token is at $*$. This is done as $*$ represents empty space and is only used in the analysis to simplify dealing with strings of different lengths. The Transformer architecture has a maximum sequence length it can output, thus this set up does not lose any generality to modern day large language models.


Let us represent the true probability distribution as $p^*$ over $\mathcal{X} \times \mathcal{Y}$. 
Given an initial next-token predictor $p_{ntp}^{0}$, we will assume that it is iteratively updated using cross entropy on the next-token distributions. Let each new iteration be $p_{ntp}^{i}$. 
\begin{assumption}\label{ntp assumption}
$\forall y \in \mathcal{Y}$, $\forall i \in [L] \quad p_{ntp}^{i}(y_i \mid y_{[i-1]}) \rightarrow p^{*}(y_i \mid y_{[i-1]})$ in KL-Divergence.
\end{assumption}
It is standard to study surrogate loss consistency when the surrogate loss is asymptotically minimized \citep{Gao_Zhou_2011a, JMLR:v8:tewari07a}. It can be easily seen that minimizing cross entropy implies the KL-Divergence is $0$. From a practical standpoint, this assumption is credible as, given proper data, modelling, and updating, the next-token conditional distributions will converge to the true conditional distribution through the minimization of cross entropy. We then show in Appendix \ref{ntp kl div assumption app} that this also implies $p_{ntp} \rightarrow p^*$ in KL-Divergence as well.


Now, given $p_{ntp}^{i} \rightarrow p^*$, this paper studies when our decoding algorithms have the property
\[R(\mathcal{D}, p^*, p_{ntp}^i, \ell) \rightarrow \inf_{h: \mathcal{X} \rightarrow \mathcal{Y}} R(h, p^*, \ell). \]

\section{Consistency for N-gram Hamming loss} \label{n-gram Hamming section}
Historically, N-grams have been important in sequence metrics like the BLEU score \citep{papineni2002bleu} and the ROUGE-N score~\citep{lin2004rouge}. N-grams are used to segment a sequence into portions evaluate the correctness of each portion. We will take this idea and define a new loss function, which we call the N-gram Hamming loss. Mathematically, we define it as:

\[\sum_{i=1}^{L-N+1} \mathds{1}_{\left\{\hat{y}_{i:i+N-1} \neq y_{i:i+N-1}\right\}}.\]
For $N=1$ this is the Hamming loss and for when $N=L$ we have the 0-1 loss, which themselves are two canonical loss functions in machine learning. Intermediate losses when $N \in [2, L-1]$ might be useful in their own right, but here we consider them as a mathematically tractable representative for the various N-gram based metrics used in sequence learning.

We want to determine for which probability distributions will our decoding algorithms always produce the optimal output for all sets of inputs with positive measure. Below we show what is optimal for the N-gram Hamming loss:

\begin{lemma}\label{optimal N-gram}
    Let $p$ be a probability distribution over a output of sequences and let
\[g(y) = \sum_{i=1}^{L-N+1}p(y_{i:i+N-1}).\]
Then, the optimal output for N-gram Hamming is 
\[\arg\max_y\{g(y)\}.\]
\end{lemma}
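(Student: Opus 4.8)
The plan is to use linearity of expectation to decompose the expected risk position by position, then recognize each term as a marginal N-gram probability. Throughout, fix an input $x$ (so all probabilities are conditioned on $x$) and a candidate output $\hat{y} \in \mathcal{Y}$; the quantity to minimize is $\Evv{y \sim p}{\ell(\hat{y}, y)}$, where $\ell$ is the N-gram Hamming loss. First I would write out the loss and push the expectation inside the finite sum:
\[\Evv{y \sim p}{\ell(\hat{y}, y)} = \sum_{i=1}^{L-N+1} \Evv{y \sim p}{\mathds{1}_{\left\{\hat{y}_{i:i+N-1} \neq y_{i:i+N-1}\right\}}}.\]

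Second, I would evaluate each summand. The expectation of an indicator is a probability, and $\Prob{y_{i:i+N-1} = \hat{y}_{i:i+N-1}}$ is exactly the marginal probability of the length-$N$ block starting at position $i$, which in the paper's notation is $p(\hat{y}_{i:i+N-1})$. Hence each term equals $1 - p(\hat{y}_{i:i+N-1})$, and summing over $i$ gives
\[\Evv{y \sim p}{\ell(\hat{y}, y)} = (L-N+1) - \sum_{i=1}^{L-N+1} p(\hat{y}_{i:i+N-1}) = (L-N+1) - g(\hat{y}).\]

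Third, since $(L-N+1)$ is a constant independent of $\hat{y}$, minimizing the expected risk over $\hat{y}$ is equivalent to maximizing $g(\hat{y})$, which yields the claimed minimizer $\arg\max_y g(y)$. Because the infimum over hypotheses $h : \mathcal{X} \to \mathcal{Y}$ decomposes pointwise over $x$ (the optimal $h$ simply returns this minimizer for each $x$ separately), the same computation also identifies the Bayes-optimal hypothesis, connecting the lemma to the $\inf_h R(h, p^*, \ell)$ benchmark used elsewhere.

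The only real subtlety, rather than a genuine obstacle, is the marginalization step: the indicator at position $i$ depends only on the block $y_{i:i+N-1}$, so its expectation involves the \emph{marginal} distribution of that block and not the full joint. This is precisely what lets the additive structure of the loss survive even though consecutive N-grams overlap; the overlapping blocks never interact because the loss is a sum of per-position indicators. I therefore expect no real difficulty here, as the argument reduces to linearity of expectation plus correctly reading the marginal notation.
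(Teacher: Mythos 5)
Your proof is correct and follows essentially the same route as the paper's: linearity of expectation over the $L-N+1$ indicator terms, evaluating each as $1 - p(\hat{y}_{i:i+N-1})$ via the marginal block probability, and concluding that minimizing the expected loss is equivalent to maximizing $g(\hat{y})$. The additional remarks on conditioning on $x$ and on why the overlapping N-grams cause no trouble are accurate but not needed beyond what the paper already does.
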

The proof of is left to the Appendix \ref{proof optimal N-gram} for ease of presentation. Note how this generalizes the already known optimal outputs for the Hamming and 0-1 loss \citep{Dembczyński_Waegeman_Cheng_Hüllermeier_2010a}.

\subsection{Exchanging consistency for optimality}
 Here we give a useful result that will allow us exchange the limit and expectation in the expected risk, given a decoder meets the assumptions needed.

\begin{prop} \label{convergence prop}
Suppose $p_{\mathcal{D}( p_{ntp})\mid x}$ is the probability distribution of the output of $\mathcal{D}( p_{ntp})\mid x$. Then, given an $M$-bounded loss function $\ell$ and
\[\forall x \in \mathcal{X}, \forall y \in \mathcal{Y} \lim_{i \rightarrow \infty} \p{p_{\mathcal{D}(p_{ntp}^i)\mid x}(y) -p_{\mathcal{D}( p_{ntp}^{*})\mid x}(y)} = 0.\] Then
\begin{align*}
   \lim_{i\rightarrow \infty} R(\mathcal{D}, p^*, p_{ntp}^i, \ell) = R(\mathcal{D}, p^*, p_{ntp}^*, \ell).
\end{align*}
\end{prop}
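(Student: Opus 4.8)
The plan is to expand the expected risk into an explicit iterated integral-sum, exploit the finiteness of the output space $\mathcal{Y}$ to push the limit through the inner sums for each fixed $x$, and then invoke the Dominated Convergence Theorem to move the limit through the outer integral over $\mathcal{X}$. The whole difficulty is organizational: separating the part where limits commute trivially (the finite sums) from the part where an actual convergence theorem is needed (the outer integral).

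First I would write, for each $i$,
\[R(\mathcal{D}, p^*, p_{ntp}^i, \ell) = \int_{\mathcal{X}} F_i(x)\, dp_x(x), \qquad F_i(x) := \sum_{y \in \mathcal{Y}} \sum_{\hat{y} \in \mathcal{Y}} p^*(y \mid x)\, p_{\mathcal{D}(p_{ntp}^i)\mid x}(\hat{y})\, \ell(\hat{y}, y),\]
and define $F_*(x)$ identically but with $p_{ntp}^*$ in place of $p_{ntp}^i$. Because $\mathcal{V}$ is finite and sequences have length at most $L$, the set $\mathcal{Y}$ is finite, so $F_i(x)$ is a finite double sum for every $x$.

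Next, fix $x \in \mathcal{X}$. The hypothesis of the proposition gives $p_{\mathcal{D}(p_{ntp}^i)\mid x}(\hat{y}) \to p_{\mathcal{D}(p_{ntp}^*)\mid x}(\hat{y})$ for each $\hat{y} \in \mathcal{Y}$, while $p^*(y \mid x)\,\ell(\hat{y}, y)$ does not depend on $i$. Since the sums are finite, linearity of limits over finite index sets yields $F_i(x) \to F_*(x)$ pointwise in $x$, with no analytic subtlety whatsoever. This disposes of the two inner expectations.

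The remaining, and main, step is to interchange $\lim_i$ with $\int_{\mathcal{X}}\,dp_x$, which is a genuine interchange only because $\mathcal{X}$ need not be discrete. Here I would use the $M$-boundedness of $\ell$: since $|\ell| \le M$ and $\sum_{y} p^*(y \mid x) = \sum_{\hat{y}} p_{\mathcal{D}(p_{ntp}^i)\mid x}(\hat{y}) = 1$, a triangle-inequality bound gives $|F_i(x)| \le M$ for every $i$ and every $x$. The constant function $M$ is $p_x$-integrable because $p_x$ is a probability measure, so the Dominated Convergence Theorem applies and gives
\[\lim_{i\to\infty} R(\mathcal{D}, p^*, p_{ntp}^i, \ell) = \lim_{i\to\infty}\int_{\mathcal{X}} F_i(x)\,dp_x(x) = \int_{\mathcal{X}} F_*(x)\,dp_x(x) = R(\mathcal{D}, p^*, p_{ntp}^*, \ell).\]
The only place requiring care is this outer interchange, and the dominating function is simply the constant $M$; thus the $M$-boundedness assumption is exactly what the argument consumes, whereas the finiteness of $\mathcal{Y}$ renders the inner limit exchange immediate.
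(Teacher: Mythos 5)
Your proof is correct and follows essentially the same route as the paper's: both reduce the claim to an application of the dominated convergence theorem over $\mathcal{X}$, using the finiteness of $\mathcal{Y}$ to dispose of the inner sums and the $M$-bound on $\ell$ to supply integrability. Your bookkeeping is in fact slightly cleaner --- you dominate the entire inner expectation $F_i(x)$ by the constant $M$ and apply the theorem once, whereas the paper factors $M$ out, applies dominated convergence to each term $\Evv{x \sim p_x}{p_{\mathcal{D}(p_{ntp}^i)\mid x}(\hat{y}) - p_{\mathcal{D}(p_{ntp}^*)\mid x}(\hat{y})}$ separately (dominated by $1$), and then must run the argument in both directions to upgrade the resulting one-sided bound to the claimed equality.
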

We leave the proof to Appendix \ref{proof convergence prop} for ease of presentation. This allows us to deal with the {\em optimality} of our decoding algorithms given a true sequence distribution $p^*$ instead of the {\em consistency} of a sequence $p^i$. The assumption is also very reasonable, if not even a desirable trait for a decoder to have; it says that as $p^i$ converges to $p^*$, the probability of our decoder outputting any sequence using $p^i_{ntp}$ should converge to the probability of our decoder outputting that sequence under $p^*_{ntp}$.

\subsection{Optimal decoding for all probability distributions is not in polynomial time}
To motivate the usage of various decoding algorithms for the N-gram Hamming loss, we will show that, even if we have access to next-token predictons from the true sequence distribution ($p^*$), there does not exist a polynomial-time (in sequence length $L$) decoding algorithm that is optimal for all probability distributions. We show in Section \ref{stoachastic kghl} that stochastic decoders (i.e., decoders that can sometimes choose one value or another depending on internal
randomness) can be not optimal so long as they put non-zero mass on an non-optimal output. Thus we are left with two types of decoders, deterministic decoders and stochastic decoders that put all the probability mass on optimal outputs. Let us call the latter optimal stochastic decoders.
\begin{theorem}\label{exptime}
    Let $\mathcal{V}$ be a vocabulary and let $\mathcal{Y}$ have maximum length $L$. Let $p$ be such that 
    \[\forall y \in \mathcal{Y}, \ \forall i \in L \quad p(y_i| y_{[i-1]}) = \frac{1}{|\mathcal{V}|}.\] Then, any optimal deterministic or optimal stochastic decoder algorithm $\mathcal{D}$ for the N-gram Hamming loss must have a runtime of at least $C(|\mathcal{V}|^L-1)$, assuming queries to the next-token predictor take $C$ time.
\end{theorem}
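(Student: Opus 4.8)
The plan is to argue by an adversary (indistinguishability) argument over the sequence of queries the decoder issues to the next-token predictor, exploiting that the uniform $p$ is a maximally ambiguous instance. First I would invoke Lemma~\ref{optimal N-gram}: since every conditional equals $1/|\mathcal{V}|$, every $N$-gram window marginal equals $1/|\mathcal{V}|^N$, so $g(y) = (L-N+1)/|\mathcal{V}|^N$ is constant and \emph{every} $y \in \mathcal{Y}$ is optimal. Thus the uniform instance on its own carries no information about which output to pick, yet the decoder $\mathcal{D}$ is required to be optimal for all distributions simultaneously. I would model one run of $\mathcal{D}$ on $p$ as an adaptive transcript of conditional queries $p_{ntp}(v\mid w)$, each answered by $1/|\mathcal{V}|$, terminating in an output $\hat y$ (for a fixed setting of any internal randomness).

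The crux is a localized perturbation that is invisible to most queries yet shifts the optimal set. For a target full-length sequence $y^*$ I would modify only the last-token conditional at the prefix $y^*_{[L-1]}$, raising $p(y^*_L \mid y^*_{[L-1]})$ above $1/|\mathcal{V}|$ and compensating on the other tokens, leaving every other conditional at $1/|\mathcal{V}|$; call this distribution $q_{y^*}$. Two facts drive the argument. First, invisibility: since only the conditional at $y^*_{[L-1]}$ changes, any query $p_{ntp}(v\mid w)$ with $w \neq y^*_{[L-1]}$ returns $1/|\mathcal{V}|$ under both $p$ and $q_{y^*}$. Second, the optimality shift: redistributing mass among the last token preserves the marginal of positions $1,\dots,L-1$, hence all window marginals except the final one, so the objective becomes $\sum_i q_{y^*}(y_{i:i+N-1}) = (L-N)/|\mathcal{V}|^N + q_{y^*}(y_{L-N+1:L})$, whose unique maximizing final window is $y^*_{L-N+1:L}$; every output whose final $N$-gram differs is then strictly suboptimal. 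The $0$-$1$ case $N=L$ is the clean special case where $q_{y^*}$ makes $y^*$ itself the unique optimum.

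With these in hand I would run the adversary: answer all of $\mathcal{D}$'s queries by $1/|\mathcal{V}|$ and read off its output $\hat y$. If there is a sequence $y^* \neq \hat y$ whose last-token conditional at $y^*_{[L-1]}$ the transcript has not pinned down and whose final window differs from $\hat y_{L-N+1:L}$, then planting $q_{y^*}$ leaves the entire transcript unchanged, so $\mathcal{D}$ still outputs $\hat y$, now suboptimal, contradicting optimality. Hence the transcript must determine the final-token conditional of every full sequence except at most the one output, forcing at least $|\mathcal{V}|^L - 1$ distinct queries; multiplying by the per-query cost $C$ yields the claimed runtime $C(|\mathcal{V}|^L-1)$. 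For an optimal stochastic decoder the same perturbation applies: by Section~\ref{stoachastic kghl} it must place all mass on optimal outputs, so it suffices to choose $y^*$ so that some positive-probability output is excluded from the shifted optimal set, which is again possible unless the decoder has queried, across its randomness, enough to pin essentially every final conditional.

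The main obstacle is the perturbation step for general $N$: a perturbation deep enough to stay invisible to shallow queries moves only the final window marginal, so I must verify it still forces a genuine error (it does, because the final $N$-gram is uniquely pinned) rather than merely reshuffling a large tie class. Equally delicate is the bookkeeping of which conditionals are ``determined'' — in particular the sum-to-one relation among the tokens at a node, and the passage from a fixed deterministic transcript to the randomized, optimal-stochastic case — which must be handled carefully for the count to land at $|\mathcal{V}|^L - 1$.
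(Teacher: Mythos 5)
Your overall strategy --- an adversary/indistinguishability argument in which unqueried conditionals are perturbed while respecting the sum-to-one constraint --- is exactly the approach the paper takes, and your analysis of the last-level perturbation is actually \emph{more} careful than the paper's: you correctly verify that redistributing mass at a single length-$(L-1)$ prefix leaves every window marginal at positions $1,\dots,L-N$ untouched and makes $y^*_{L-N+1:L}$ the unique maximizing final window, so that $g$ is maximized exactly by sequences ending in that window.

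However, there is a genuine quantitative gap, and it is the one you flagged yourself: by localizing all perturbations to the final level of the prefix tree, your count cannot reach $|\mathcal{V}|^L-1$. What your argument forces is that every length-$(L-1)$ prefix has at most one unqueried child (two unqueried siblings would let the adversary raise whichever one yields a final window different from $\hat y$'s, and at most one child of a given prefix can match $\hat y$'s final window). Because a single unqueried child per prefix is determined by sum-to-one, this yields a lower bound of $(|\mathcal{V}|-1)|\mathcal{V}|^{L-1}=|\mathcal{V}|^L-|\mathcal{V}|^{L-1}$ queries, not $|\mathcal{V}|^L-1$; the deficit is $|\mathcal{V}|^{L-1}-1$ and is not recoverable by better bookkeeping at the leaves. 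The paper closes this gap by running the same pigeonhole argument at \emph{every} level $j$ of the tree: it uses the identity $|\mathcal{V}|^L-1=\frac{|\mathcal{V}|-1}{|\mathcal{V}|}\sum_{j=1}^{L}|\mathcal{V}|^j$, so that fewer than $|\mathcal{V}|^L-1$ total queries forces some level $j$ to contain a sibling group with two unqueried nodes, at which point the adversary perturbs an \emph{internal} conditional. So to match the stated constant you would need to extend your perturbation lemma from leaf conditionals to conditionals at arbitrary depth $j$ (showing that trading mass between two unqueried siblings at level $j$ still strictly shifts $\arg\max_y g(y)$ away from $\hat y$), and then charge $|\mathcal{V}|-1$ queries per sibling group per level. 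As written, your argument does establish the $\Omega(|\mathcal{V}|^L)$ bound needed for Corollary~\ref{exp time cor}, but not the precise constant in Theorem~\ref{exptime}.
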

The proof is left to Appendix \ref{exptime proof}.
\begin{corollary}\label{exp time cor}
    Optimal decoding of the N-gram Hamming problem takes exponential time in $L$ assuming black-box access to next-token probabilities.
\end{corollary}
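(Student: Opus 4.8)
The plan is to run an adversary / indistinguishability argument rather than to reason about $p$ directly. First I would observe that under the stated $p$ every conditional query returns exactly $1/|\mathcal{V}|$, so the oracle's answers carry no information, and moreover by Lemma \ref{optimal N-gram} the score $g(y)=\sum_{i}p(y_{i:i+N-1})$ equals the constant $(L-N+1)|\mathcal{V}|^{-N}$, so \emph{every} output is optimal for $p$ itself. Hence the lower bound cannot come from $p$ in isolation; it must come from the requirement (stated in the section preamble) that $\mathcal{D}$ be optimal for \emph{all} distributions. So I would fix an ``always $1/|\mathcal{V}|$'' oracle, run $\mathcal{D}$ against it, and let $Q$ be the (possibly adaptive) set of conditionals it queries before committing to an output $\hat{y}$. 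The key invariance is that $\mathcal{D}$ behaves identically on any distribution $p'$ whose conditionals agree with $p$ on $Q$: the transcript of answers is the same, so an adaptive decoder follows the same path and commits to the same $\hat{y}$.

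Second, I would show that a short query set leaves room to demote $\hat{y}$. There are exactly $|\mathcal{V}|^{L}-1$ independent conditional parameters (each internal prefix $w$ contributes the $|\mathcal{V}|-1$ free entries of $p(\cdot\mid w)$, and $\sum_{k=0}^{L-1}|\mathcal{V}|^{k}(|\mathcal{V}|-1)=|\mathcal{V}|^{L}-1$). Since a single query pins at most one such parameter, if $|Q|<|\mathcal{V}|^{L}-1$ then by counting some prefix $w$ has at most $|\mathcal{V}|-2$ of its children queried, hence at least one unpinned degree of freedom in $p(\cdot\mid w)$. I would then perturb only that one conditional slightly away from uniform (staying in the simplex), which changes no conditional in $Q$ and therefore yields a distribution $p'$ agreeing with $p$ on all of $Q$. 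The goal is to choose the perturbation so that $\hat{y}$ is no longer a maximizer of the resulting $g$. This is cleanest at the $N=L$ (0-1 loss) end, where $g(y)=p(y)$ and reweighting the children of $w$ raises a competing sequence's probability strictly above $p(\hat{y})=|\mathcal{V}|^{-L}$, so $\hat{y}$ is strictly suboptimal under $p'$.

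Third, I would close the loop for both decoder types. For a deterministic $\mathcal{D}$, it outputs $\hat{y}$ under $p'$ as well (same transcript), contradicting optimality for $p'$; hence $|Q|\ge|\mathcal{V}|^{L}-1$. For an optimal stochastic $\mathcal{D}$ --- which, as established in Section \ref{stoachastic kghl}, must place all mass on optimal outputs --- I would fix any realization of its internal randomness, apply the same demotion to the realized committed output, and conclude that $\mathcal{D}$ puts positive mass on a suboptimal output of $p'$, again a contradiction; so every realization makes at least $|\mathcal{V}|^{L}-1$ queries. Charging $C$ per query gives runtime at least $C(|\mathcal{V}|^{L}-1)$.

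The main obstacle is the demotion step: guaranteeing that an unpinned conditional can always be pushed to strictly break optimality of the committed output while preserving every answered conditional. The delicate point is that altering one conditional perturbs many N-gram marginals at once, so for general $N$ one must check that the induced change in the $g$-vector is non-degenerate in a direction that moves $\hat{y}$ off the set of maximizers; I would verify this by exhibiting an explicit competitor whose $g$-score strictly increases (equivalently, whose gap to $\hat{y}$ strictly opens under one sign of the perturbation), which is immediate for $N=L$ and requires tracking the affected N-grams in general. Secondary care is needed for the adaptivity invariance (identical transcripts in both worlds) and for the normalization bookkeeping that turns ``fewer than $|\mathcal{V}|^{L}-1$ queries'' into ``an unpinned degree of freedom at some node.''
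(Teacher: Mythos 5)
Your proposal is correct and follows essentially the same route as the paper, whose proof of this corollary simply invokes Theorem \ref{exptime}: an adversary/indistinguishability argument showing that fewer than $|\mathcal{V}|^L-1$ queries leaves two unqueried siblings at some prefix, whose conditional the adversary can perturb to demote the committed output while keeping the query transcript identical, with the stochastic case handled by reducing to Proposition \ref{sto not opt}. Your degrees-of-freedom count replaces the paper's pigeonhole-over-tree-levels argument but reaches the same ``two unqueried children of a common prefix'' conclusion, and the demotion step for general $N$ that you rightly flag as the delicate point is asserted with no more detail in the paper's own proof.
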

\begin{proof}
    Since accessing from memory is assumed to be constant time, we have shown there is a distribution that will take $\Omega(|\mathcal{V}|^L)$ runtime. This, combined with the results in Section \ref{stoachastic kghl}, gives us our result.
\end{proof}

\subsection{$K_T$-lookahead decoding}
For this subsection, ties are a curse. When choosing the next token(s), if we have at least $2$ sequences in our next token(s) $\argmax$ such that at least $1$ puts the algorithm not on a path to an overall sequence $\argmax$, then there is no way for our algorithm to be optimal for all probability distributions. This is because there are two ways to break ties: deterministic breaks or random breaks. For a deterministic tiebreak, we can adversarially create a probability distribution where we choose wrong. For random tiebreaks, we show in in Section \ref{stoachastic kghl} that it also can not be optimal. Therefore, we will only look at the class of probability distributions where we will not run into any ties in any of our $\argmax$s. Let this class be called $\mathcal{P}$.

We note that restricting to this set means that we can not use the example given in the proof of optimal decoding requiring exponential time. However, the proof does not rely on ties, it instead relies on having no conditional probability dominating the others. Thus, one can imagine extremely small perturbations to the example's conditional distributions such that the distribution is in $\mathcal{P}$, but is still very close to the uniform distribution. The rest of the proof would then work out the same.

\begin{lemma}\label{k-look meets prop 1}
    $K_T$-lookahead decoding meets the criteria to use Proposition \ref{convergence prop}
\end{lemma}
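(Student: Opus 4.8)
The plan is to verify the two hypotheses of Proposition \ref{convergence prop}: that the loss is $M$-bounded, and that the output distributions of $K_T$-lookahead converge pointwise as $p_{ntp}^i \to p_{ntp}^*$. Boundedness is immediate, since the N-gram Hamming loss is a sum of $L-N+1$ indicators and is therefore bounded by $M = L-N+1$. The substance of the lemma is the pointwise convergence of the output distributions. Because we have restricted attention to the tie-free class $\mathcal{P}$, the output of $K_T$-lookahead driven by $p_{ntp}^*$ is a single deterministic sequence, so $p_{\mathcal{D}(p_{ntp}^*)\mid x}$ is a point mass at some $y^* \in \mathcal{Y}$. It therefore suffices to show that for all sufficiently large $i$, $K_T$-lookahead driven by $p_{ntp}^i$ also outputs exactly $y^*$.

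First I would translate Assumption \ref{ntp assumption} into convergence of the individual conditional probabilities. Each conditional distribution converges in KL-divergence, so Pinsker's inequality gives convergence in total variation, and since $\mathcal{Y}$ is finite this yields $p_{ntp}^i(v \mid w) \to p^*(v \mid w)$ for every token $v$ and every prefix $w$. Each quantity over which the algorithm takes an $\argmax$ is a product of $c \leq L$ such conditional factors, every factor lying in $[0,1]$; a finite product of bounded convergent sequences converges, so each candidate score under $p_{ntp}^i$ converges to its counterpart under $p_{ntp}^*$.

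Next I would exploit the defining property of $\mathcal{P}$. At the first iteration the algorithm maximizes a score over the finitely many $c$-length continuations, and since $p^* \in \mathcal{P}$ there are no ties, so the largest score strictly exceeds the second-largest by some gap $\delta_1 > 0$. Convergence of all finitely many scores then forces the $p_{ntp}^i$-$\argmax$ to coincide with the $p_{ntp}^*$-$\argmax$ once $i$ is large enough that every score is within $\delta_1/2$ of its limit, so the retained prefix agrees with that of $y^*$. The argument is path-dependent but self-correcting: once the retained prefix matches $y^*$, the following iteration conditions on the identical prefix under both $p_{ntp}^i$ and $p_{ntp}^*$, so the same reasoning applies with a fresh gap $\delta_j > 0$. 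Since the algorithm terminates in at most $L$ iterations (each adds $H \geq 1$ tokens), taking $i_0$ to be the maximum of the finitely many per-iteration thresholds guarantees that for all $i \geq i_0$ the entire trajectory, and hence the output, equals $y^*$.

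The main obstacle I anticipate is exactly this path-dependence: the convergence of scores at iteration $j$ is only meaningful conditional on having already reached the correct prefix, so the bookkeeping must proceed iteration by iteration with a per-step gap rather than bounding all scores simultaneously. Once this is handled, the conclusion is immediate: $p_{\mathcal{D}(p_{ntp}^i)\mid x}(y^*) \to 1 = p_{\mathcal{D}(p_{ntp}^*)\mid x}(y^*)$ and $p_{\mathcal{D}(p_{ntp}^i)\mid x}(y) \to 0 = p_{\mathcal{D}(p_{ntp}^*)\mid x}(y)$ for every $y \neq y^*$, which is precisely the convergence hypothesis required by Proposition \ref{convergence prop}.
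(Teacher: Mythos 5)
Your proof is correct and takes essentially the same route as the paper's: since only finitely many conditional scores are ever consulted and $p^* \in \mathcal{P}$ rules out ties, convergence of those scores forces the $\argmax$ under $p_{ntp}^i$ to eventually coincide with that under $p_{ntp}^*$, so the output distributions (point masses) converge. The paper's version is considerably terser and glosses over the iteration-by-iteration path-dependence that you handle explicitly with per-step gaps, so your write-up is if anything more complete.
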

The proof is left to Appendix \ref{proof lemma k-look meets prop 1} for ease of presentation. Using this lemma, we only need to concern ourselves with $p^*$ when trying to show consistency.

Now, in order for $K_T$-lookahead decoding to be optimal for the N-gram Hamming loss, we give the following necessary and sufficent condition for the true probability distribution:

\begin{theorem}\label{k-lookahead opt theorem}
    Let us have a probability distribution $p^* \in \mathcal{P}$ over $\mathcal{X} \times \mathcal{Y}$ and let 
    \begin{align*}
        C = \{&x \mid \argmax_y \sum_{i=1}^{L-N+1}p^*(y_{i:i+N-1} \mid x) = y^\dagger \text{ where }\\ 
        &y^\dagger_{(Tc+1):\min\{(Tc+T),L\}} = \p{\argmax_{y_{(Tc+1):\min\{(Tc+K),L\}}}\{p^*(y_{(Tc+1):\min\{(Tc+K),L\}} \mid x, y^\dagger_{[Tc]})\}}_{[T]}\\
        &\text{ for } c\in \Z_+, Tc \leq L-T\}.
    \end{align*}
    Then, $K_T$-lookahead is N-gram Hamming loss optimal for $p^*$ iff
    \[p^*_x(C) = 1.\]
\end{theorem}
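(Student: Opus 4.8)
The plan is to first reduce consistency to pointwise optimality, and then show the $K_T$-lookahead output agrees with the N-gram Hamming optimizer exactly on $C$. By Lemma \ref{k-look meets prop 1} I may invoke Proposition \ref{convergence prop}, so it suffices to analyze the limiting predictor $p_{ntp}^* = p^*$ and compare $R(\mathcal{D}, p^*, p_{ntp}^*, \ell)$ with $\inf_{h} R(h, p^*, \ell)$. On $\mathcal{P}$ there are no ties, so $K_T$-lookahead is a deterministic map $x \mapsto \hat{y}(x)$, and by Lemma \ref{optimal N-gram} the pointwise minimizer of $\Evv{y \sim p^* \mid x}{\ell(\cdot, y)}$ is the unique $y^\dagger(x) = \argmax_y g(y \mid x)$. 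Writing the risk gap as
\[R(\mathcal{D}, p^*, p_{ntp}^*, \ell) - \inf_h R(h, p^*, \ell) = \Evv{x \sim p_x^*}{\Evv{y}{\ell(\hat{y}(x), y)} - \Evv{y}{\ell(y^\dagger(x), y)}},\]
the integrand is nonnegative and, by uniqueness of $y^\dagger(x)$, vanishes exactly when $\hat{y}(x) = y^\dagger(x)$. Hence $K_T$-lookahead is optimal iff $\hat{y}(x) = y^\dagger(x)$ for $p_x^*$-almost every $x$, and the theorem reduces to proving the set identity $\{x : \hat{y}(x) = y^\dagger(x)\} = C$ up to $p_x^*$-null sets.

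The core step is an induction on the block index $c = 0, 1, 2, \dots$ over the prefix $y^{(c)}$ held by the algorithm after $c$ passes of its while-loop. I claim that if $x \in C$ then $y^{(c)} = y^\dagger_{[Tc]}$ for all $c$; the base case holds because $y^{(0)}$ is empty, equal to $y^\dagger_{[0]}$. For the inductive step, assume $y^{(c)} = y^\dagger_{[Tc]}$. By the chain rule the algorithm's objective factorizes as
\[\prod_{j=1}^{\min\{K, L-Tc\}} p^*\!\p{v_j \mid x, y^\dagger_{[Tc]} + v_{1:j-1}} = p^*\!\p{v_{(Tc+1):\min\{Tc+K,L\}} \mid x, y^\dagger_{[Tc]}},\]
so its $\argmax$ is precisely the block-conditional maximizer appearing in the definition of $C$. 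Keeping the first $H = \min\{T, L-Tc\}$ of its tokens and invoking the defining equation of $C$ gives $y^{(c+1)} = y^\dagger_{[T(c+1)]}$ (read as $y^\dagger_{[L]}$ on the final block). Iterating through the last block yields $\hat{y}(x) = y^\dagger(x)$, so $C \subseteq \{x : \hat{y}(x) = y^\dagger(x)\}$.

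For the converse, suppose $x \notin C$ and let $c^*$ be the smallest block index at which the defining equation of $C$ fails. Every earlier condition holds, so the same induction shows the algorithm reaches iteration $c^*$ with prefix $y^\dagger_{[Tc^*]}$ and there conditions on exactly $y^\dagger_{[Tc^*]}$. Since $p^* \in \mathcal{P}$, the relevant $\argmax$ is a singleton, so failure of the condition means this unique greedy block differs from $y^\dagger_{(Tc^*+1):\min\{Tc^*+T,L\}}$; thus $\hat{y}(x) \neq y^\dagger(x)$. Together with the previous paragraph this gives $\{x : \hat{y}(x) = y^\dagger(x)\} = C$, and with the reduction above, $K_T$-lookahead is N-gram Hamming optimal iff $p_x^*(C) = 1$.

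The main obstacle I anticipate is index bookkeeping rather than any deep idea: aligning the algorithm's lookahead length $\min\{K, L-Tc\}$ and keep-length $\min\{T, L-Tc\}$ with the $\min\{Tc+K,L\}$ and $\min\{Tc+T,L\}$ truncations in $C$, and in particular ensuring the range of block indices $c$ in the definition of $C$ covers the final, possibly shorter, block when $T \nmid L$ (so the stated bound on $Tc$ must be read as ranging over every block the loop processes). The other delicate point is that $p^* \in \mathcal{P}$ is used twice: once so that $y^\dagger(x)$ is the unique pointwise optimizer, making $\hat{y}(x) \neq y^\dagger(x)$ on a positive-measure set force a strictly positive risk gap; and once so that each greedy $\argmax$ is a singleton, so that failure of the condition genuinely produces a deviation rather than an admissible tie.
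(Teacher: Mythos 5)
Your proof is correct and follows essentially the same route as the paper's: reduce consistency to pointwise optimality under the limiting predictor via Lemma \ref{k-look meets prop 1} and Proposition \ref{convergence prop}, identify the unique pointwise optimizer $y^\dagger(x)$ via Lemma \ref{optimal N-gram} and the no-ties assumption, and observe that $C$ is exactly the set of $x$ on which the greedy blocks track $y^\dagger$. Your block-by-block induction merely makes explicit what the paper's proof compresses into ``by the definition of $C$,'' and your observation about the final partial block when $T$ does not divide $L$ is a genuine (if minor) bookkeeping wrinkle in the stated definition of $C$ that the paper does not address.
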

The proof is left to Appendix \ref{proof k-lookahead opt theorem}.

This characterization shows that nothing magical is going on under the hood of these large language models. We are simply running greedy algorithms and thus, these algorithms will fall into the same traps greedy algorithms have been known to fall into. For example, in Appendix \ref{mc} we give a fully connected Markov chain that is not optimal by exploiting the characterization above.

Given this, it is natural to then ask how often do these decoders run into such a problem. We set up a simulation study on fully connected Markov chains to empirically test this. We create each graph by having its starting distribution and each transition distribution be Dirichlet distributed with the parameters all being the same value, $\alpha$. We group each graph by the $\alpha$ parameter used and then take the average amount of times $K_T$-lookahead was optimal. Each group has 200 graphs. We do note there can be ties in the $K_T$-lookahead $\argmax$ in our simulations and the ties are broken by which path was seen first. More details on the simulation study can be found in Appendix \ref{sim study}.  

In Figure \ref{fig:klook_1gh}, we plot the percent of times $K_1$-lookahead was optimal in a group for the $1$-gram Hamming loss against the average KL-Divergence from the uniform distribution for that group. We can see that it does not do well no matter how short the sequence is. Even for short sequences with very ``peaky'' distributions, $K_1$-lookahead decoding still guesses wrong about $10\%$ of the time.

\begin{figure}[h]
    \centering
    \includegraphics[width=\textwidth]{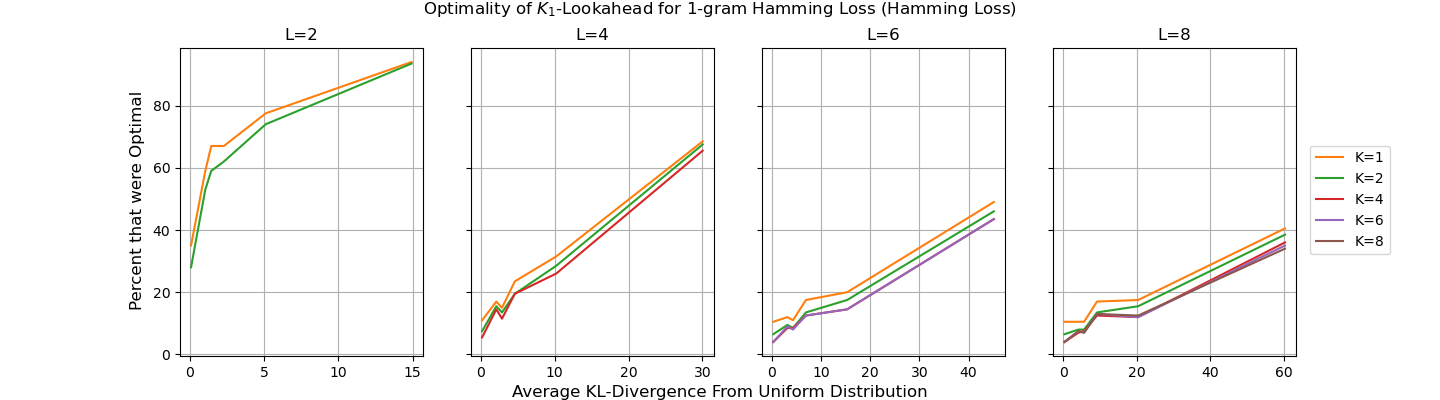}
    \caption{A plot of the amount of trials $K_1$-lookahead was optimal for the 1-gram Hamming loss (the Hamming loss). Each point represents the average optimality over $200$ randomly generated Markov chains with a set amount of nodes and Dirichlet parameter $\alpha$. Smaller $\alpha$s create more ``peaky'' distributions and thus have higher KL-divergence from the uniform distribution, while larger $\alpha$s create more uniform distributions. There were 8 nodes in each Markov chain for this figure and the sequence length goes up by two as one moves right in the plots.}
    \label{fig:klook_1gh}
\end{figure}

The next figure, Figure \ref{fig:klook_Lgh}, we see how $K_1$-lookahead does for the $L$-gram Hamming loss. When $K=L$, $K_1$-lookahead will be optimal, which is why each of them has one line that is perfect. For the rest of the lines, we see they do better than they did for the Hamming loss, which is interesting in its own right. This trend is generally seen when looking at the other values of $N,L,K$ as well. We leave a more thorough explanation and analysis of our simulations to Appendix \ref{sim study}, where we also empirically analyze $K_K$-lookahead decoding as well.
\begin{figure}[h]
    \centering
    \includegraphics[width=\textwidth]{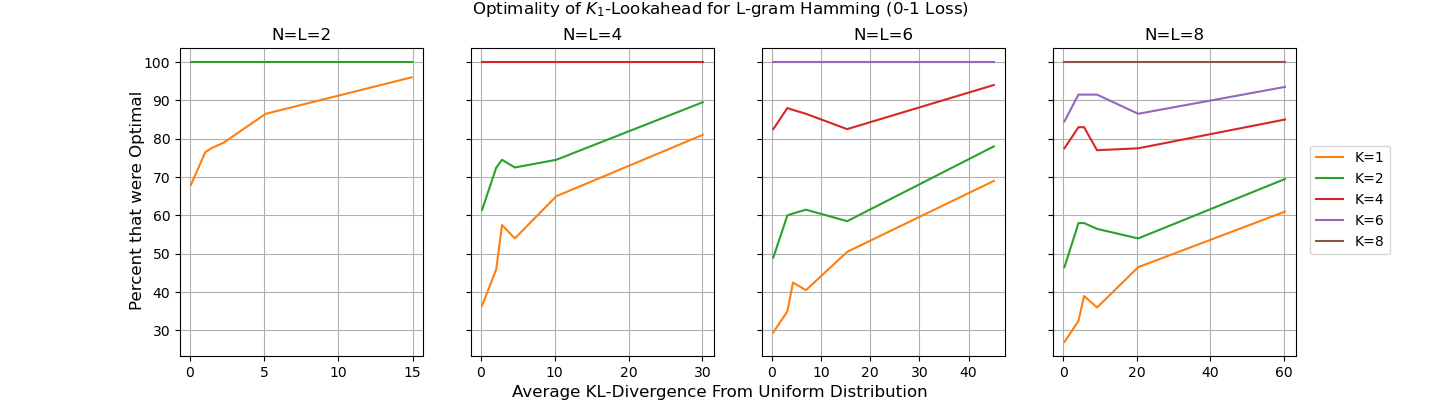}
    \caption{A plot of the amount of trials $K_1$-lookahead was optimal for the $L$-gram Hamming loss (the $0-1$ loss). The same setup as Figure \ref{fig:klook_1gh} otherwise.}
    \label{fig:klook_Lgh}
\end{figure}


A beam search is, instead of greedily choosing at each step in the lookahead algorithm, one keeps the top $B$ sequences at each step and uses them for the next steps. Once at the end, then the beam search chooses the best out of the B outputs \citep{shi2024thorough, wiher2022decoding}. We can easily extend Theorem \ref{k-lookahead opt theorem} to a $K_T$-lookahead beam search. Let $\argmaxB$ be the set of the top $B$ values.

\begin{corollary}
    Let us have our decoder be a K-lookahead beam search with beam width $B$. Let us have a probability distribution $p^* \in \mathcal{P}$ over $\mathcal{X} \times \mathcal{Y}$ and let 
    \begin{align*}
        C = \{&x \mid \argmax_y \sum_{i=1}^{L-N+1}p^*(y_{i:i+N-1} \mid x) = y^\dagger \text{ where }\\ 
        &y^\dagger_{(Tc+1):\min\{(Tc+T),L\}} \in \p{\argmaxB_{y_{(Tc+1):\min\{(Tc+K),L\}}}\{p^*(y_{(Tc+1):\min\{(Tc+K),L\}} \mid x, y^\dagger_{[Tc]})\}}_{[T]}\\
        &\text{ for } c\in \Z_+, Tc \leq L-T\}.
    \end{align*}
    Then, $K_T$-lookahead is N-Hamming loss optimal for $p^*$ iff
    \[p^*_x(C) = 1.\]
\end{corollary}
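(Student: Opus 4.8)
The plan is to reuse the proof of Theorem \ref{k-lookahead opt theorem} essentially verbatim, with the single structural change that the set of candidates retained at each lookahead step grows from the unique maximizer to the top-$B$ set $\argmaxB$. First I would record the analogue of Lemma \ref{k-look meets prop 1}: since $p^* \in \mathcal{P}$ forbids ties, the top-$B$ set at every step is unique and the beam evolves deterministically (no random tie-breaks), with each retained block depending continuously on the conditionals away from ties. Hence the output of the beam search under $p_{ntp}^i$ is eventually equal to its output under $p^*$ for every fixed $x$, so $p_{\mathcal{D}(p_{ntp}^i)\mid x}$ converges pointwise to $p_{\mathcal{D}(p_{ntp}^*)\mid x}$ and Proposition \ref{convergence prop} applies. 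This lets me replace the consistency question for the sequence $p_{ntp}^i$ by the optimality question for $p^*$ alone.

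Next I would fix the target. By Lemma \ref{optimal N-gram}, for each input $x$ the N-gram Hamming optimum is a maximizer of $g(y) = \sum_{i=1}^{L-N+1} p^*(y_{i:i+N-1}\mid x)$, and because $p^* \in \mathcal{P}$ this maximizer is unique; call it $y^\dagger(x)$. As in the theorem, the results of Section \ref{stoachastic kghl} rule out any decoder that places mass on a non-optimal output, so the beam search is optimal for $p^*$ iff it outputs $y^\dagger(x)$ for $p^*_x$-almost every $x$. The whole corollary therefore reduces to characterizing the inputs $x$ on which the beam search outputs $y^\dagger(x)$.

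The core step is to identify ``the beam search outputs $y^\dagger$'' with membership of $x$ in $C$. I would trace the path $y^\dagger$ through the search block by block: at step $c$ the beam retains the top-$B$ $K$-length continuations, and $y^\dagger_{[Tc]}$ survives into step $c+1$ precisely when its next block $y^\dagger_{(Tc+1):\min\{Tc+T,L\}}$ lies in the first $T$ coordinates of one of those top-$B$ continuations of $y^\dagger_{[Tc]}$ --- which is exactly the membership condition $\in (\argmaxB)_{[T]}$ defining $C$. Unwinding this over all $c$ with $Tc \le L-T$ shows $x \in C$ iff $y^\dagger(x)$ survives every pruning step, i.e.\ is present in the final beam. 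For the \emph{necessity} direction, if $p^*_x(C) < 1$ then on a positive-measure set $y^\dagger$ is pruned and can never be rebuilt, so the output differs from the unique optimum and the risk is strictly larger there; hence the decoder is not optimal. For \emph{sufficiency}, if $p^*_x(C)=1$ then $y^\dagger$ is in the final beam almost surely.

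The step I expect to be the main obstacle is the terminal selection among the $B$ surviving complete sequences: membership of $y^\dagger$ in the final beam does not by itself force it to be \emph{output}. I would resolve this by taking the beam's ``best of $B$'' rule to be selection of the survivor maximizing $g$ (the quantity Lemma \ref{optimal N-gram} certifies), so that once $y^\dagger$ --- the global maximizer of $g$ --- is among the survivors it is necessarily chosen; this is also where $p^* \in \mathcal{P}$ is used again, to guarantee the terminal comparison has no ties. Pinning down this selection rule, and checking it is the one under which the stated reachability condition is both necessary and sufficient, is the only place where the argument does more than relabel $\argmax$ as $\argmaxB$; the special case $B=1$ collapses $\argmaxB$ to $\argmax$ and recovers Theorem \ref{k-lookahead opt theorem} exactly.
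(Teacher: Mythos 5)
Your proposal is correct and follows essentially the same route as the paper, whose entire proof of this corollary is the single line that it is ``the same as Theorem \ref{k-lookahead opt theorem}, but with $\argmaxB$ instead of $\argmax$'' (i.e., reduce consistency to optimality for $p^*$ via Proposition \ref{convergence prop}, invoke Lemma \ref{optimal N-gram} to identify the unique optimum $y^\dagger$, and argue both directions by tracing whether the decoder's output equals $y^\dagger$ on a full-measure set). The one place you go beyond the paper --- insisting that the terminal selection among the $B$ survivors be the maximizer of $g$ so that $y^\dagger$'s presence in the final beam actually forces it to be output --- is a genuine detail the paper's one-line proof leaves implicit, and your resolution of it is the right one.
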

\begin{proof}
    The same as Theorem \ref{k-lookahead opt theorem}, but with $\argmaxB$ instead of $\argmax$.
\end{proof}

Going back to greedy $K_T$-lookahead, let $\mathcal{P}_{K,T,N}$ be all probability distributions where $K_T$-lookahead is optimal with respect to N-gram Hamming.

One could expect that increasing $K$ would monotonically increase $\mathcal{P}_{K,T,N}$ since the decoder is ``looking'' farther into the future. One would also expect that decreasing $T$ would monotonically increase $\mathcal{P}_{K,T,N}$ since taking less tokens now will allow us to reconsider later when we have more information. However, we show below that neither of these are generally the case.


\begin{prop}\label{not monotone prop}
    $\forall K_1,K_2 \in [L-1]$, where $K_1 < K_2$, $\forall T_1 \in [K_1]$, \ $\forall T_2 \in [K_2]$, and $\forall N \in [L]$, we have  $\mathcal{P}_{K_1,T_1,N} \not\subset \mathcal{P}_{K_2,,T_2N}$.
\end{prop}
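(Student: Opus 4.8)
The plan is to prove the proposition by exhibiting, for each admissible tuple $(K_1,K_2,T_1,T_2,N)$, a single witnessing distribution. Thus it suffices to construct a $p^*\in\mathcal{P}$ over a \emph{singleton} input space for which $K_1$-with-$T_1$ lookahead reproduces the N-gram optimizer but $K_2$-with-$T_2$ lookahead does not. Because $p^*_x$ is then a point mass, the set $C$ of Theorem \ref{k-lookahead opt theorem} is all of $\mathcal{X}$ or empty, so $p^*_x(C)=1$ holds exactly when the lookahead trajectory equals $y^\dagger=\argmax_y g(y)$ from Lemma \ref{optimal N-gram}. Establishing that the trajectory matches $y^\dagger$ for $K_1$ but not for $K_2$ is therefore precisely the statement $p^*\in\mathcal{P}_{K_1,T_1,N}\setminus\mathcal{P}_{K_2,T_2,N}$, which gives the non-containment.

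The computation simplifies through the identity $\prod_{j=1}^{m}p^*(v_j\mid v_{[j-1]})=p^*(v_{1:m})$: the quantity maximized inside the lookahead $\argmax$ over a window of length $m$ is just the marginal probability of the length-$m$ prefix. Hence the first token committed by $K$-lookahead is the first token of the marginally most probable length-$K$ prefix, for \emph{any} $T\ge 1$. Since $T_1,T_2\ge 1$, it is then enough to arrange that the most probable length-$K_1$ prefix and the most probable length-$K_2$ prefix begin with different symbols; the actual values of $T_1,T_2$ become irrelevant, disposing of those two universal quantifiers at once.

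Concretely, I would build a probability tree with two distinguished root-branches: a \emph{good} branch beginning with a symbol $a$ that carries $y^\dagger$, and a \emph{decoy} branch beginning with $b\ne a$. The good branch is kept concentrated (conditionals near one) through depth $K_1$, is diluted across positions $K_1+1,\dots,K_2$ (each conditional the strict mode but bounded away from one), and re-concentrates afterward; the decoy is concentrated through depth $K_2$ and diluted only past it. Every departure from these two paths is routed into a near-uniform subtree whose penalty is made arbitrarily large by enlarging $|\mathcal{V}|$, guaranteeing that from each prefix of $y^\dagger$ the unique most probable bounded continuation stays on $y^\dagger$, so $K_1$-lookahead follows $y^\dagger$ to the end. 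The dilution window $[K_1+1,K_2]$ is placed so that the most probable length-$K_2$ prefix jumps to the decoy: the good prefix is driven below the still-concentrated decoy prefix by depth $K_2$ while remaining above it at depth $K_1$. A short system of strict inequalities makes all three requirements hold, and it is feasible \emph{exactly} because $K_2\le L-1$ leaves at least one position after the long window in which to dilute the decoy; note that if $K_2=L$ were allowed with $T_2=K_2$, the long window would commit the entire most probable sequence and be trivially $0$--$1$ optimal, so this slack is essential.

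The main obstacle, and the only $N$-dependent part, is verifying that $y^\dagger$ really is the N-gram optimizer $\argmax_y\sum_i p^*(y_{i:i+N-1})$. The two goals are in tension: fooling the longer window forces the good prefix to be improbable at depth $K_2$, whereas N-gram optimality pushes $y^\dagger$ to be probable, most sharply at $N=L$, where the optimizer is the single most probable sequence and prefix probability upper-bounds full-sequence probability. The resolution is to confine the good branch's dilution to the interior window $[K_1+1,K_2]$, let it re-concentrate afterward, and dilute the decoy only on the nonempty suffix beyond $K_2$; one then tunes the mass split, the per-position dilution levels, and $|\mathcal{V}|$ as functions of $N$ so that $g(y^\dagger)$ strictly dominates $g$ of the decoy and of all near-uniform sequences. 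I expect $N=L$ (compare single full-sequence probabilities) and $N=1$ (compare per-position marginals, where the crossover is driven by marginal-versus-maximal-joint probability at the first symbol) to require the most care, with intermediate $N$ interpolating; each reduces to a finite set of strict inequalities, and a generic perturbation keeps $p^*$ inside the tie-free class $\mathcal{P}$.
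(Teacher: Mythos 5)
Your high-level strategy is the same as the paper's: exhibit an explicit counterexample distribution, use the fact that the product of conditionals over a length-$K$ window telescopes to the prefix marginal (so the first committed token is the first token of the most probable length-$K$ prefix, making $T_1,T_2$ essentially irrelevant for forcing divergence), and treat $N=L$ separately from $N<L$. The paper does exactly this with two four-atom distributions, e.g.\ for $N<L$ it takes $p(0\dots0)=.28$, $p(0^{K_2-1}10\dots0)=.12$, $p(20\dots0)=.23$, $p(1\dots1)=.37$, so that the best length-$K_1$ prefix is $0^{K_1}$ (mass $.40$) but the best length-$K_2$ prefix is $1^{K_2}$ (mass $.37$ vs.\ $.28$).

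The gap is that your construction is never actually pinned down, and the specific design you sketch does not survive the $N$-dependent verification you correctly identify as the main obstacle. To fool the $K_2$-window you must make the good length-$K_2$ prefix strictly less probable than the decoy's length-$K_2$ prefix. If, as you propose, every departure from the two distinguished paths is routed into near-uniform subtrees over a large vocabulary, then at each position $i\in(K_1,K_2]$ the marginal mass on the good branch's symbol is essentially just the good prefix mass, while the decoy's concentrated prefix puts strictly more mass on its own symbol at position $i$. Hence for $N=1$ (and small $N$ generally) the $N$-gram optimizer would place the decoy's symbol at those positions and $y^\dagger$ would \emph{not} be optimal; you locate the $N=1$ difficulty ``at the first symbol,'' but it actually bites throughout the dilution window. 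The paper escapes this with a device your proposal lacks: an auxiliary atom such as $20\dots0$ that starts with a \emph{different} first symbol (so it does not inflate the good prefix probability and cannot rescue the $K_2$-lookahead) yet agrees with the good branch at every position $\ge 2$, pushing the marginal of $0$ at each such position up to $.28+.23=.51$, above the decoy's $.37$. Without that (or an equivalent arrangement in which the decoy itself agrees with $y^\dagger$ after position $1$), your ``short system of strict inequalities'' is infeasible for small $N$, so the proof as written does not go through for all $N\in[L]$.
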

\begin{prop}\label{not monotone T prop}
    Let $K \in \{2,3\dots,L-1\}$, let  $N \in [L]$ and let $T\in [K]$. 
    If $K < L-T$, then 
    $\mathcal{P}_{K,T+1, N} \not\subset \mathcal{P}_{K,T,N}$
\end{prop}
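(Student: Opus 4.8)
The plan is to prove the non-containment by exhibiting a single counterexample distribution: I will construct a $p^*\in\mathcal{P}$ over a singleton input space (so that $p^*_x$ is a point mass at some $x^*$) for which $K_{T+1}$-lookahead is N-gram Hamming optimal but $K_T$-lookahead is not. By Theorem \ref{k-lookahead opt theorem} this reduces to showing that $x^*$ lies in the set $C$ built with commitment step $T+1$ but not in the set $C$ built with step $T$; and since $p^*\in\mathcal{P}$ has no ties, failure of membership is equivalent to the $K_T$ greedy path differing from the N-gram optimum $y^\dagger$ of Lemma \ref{optimal N-gram}. Throughout I assume $T+1\le K$ so that $K_{T+1}$ is well defined, which is implicit in writing $\mathcal{P}_{K,T+1,N}$.

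The mechanism exploits the misalignment of the two decoders' refresh schedules after a common deterministic prefix. I will force positions $1,\dots,T$ to be a fixed symbol $0\in\mathcal{V}$, so that both decoders commit the correct prefix $0^T$ (the symbol $0$ repeated $T$ times) in their first step. The decisive asymmetry is then the following: in its first step $K_{T+1}$ commits token $T+1$ using the joint mode of the free length-$(K-T)$ block at positions $T+1,\dots,K$ (the free part of its length-$K$ window), whereas in its second step $K_T$ commits token $T+1$ using the joint mode of the full length-$K$ block at positions $T+1,\dots,T+K$. These two argmaxes start at the same position $T+1$, but the second one sees the $T$ extra positions $K+1,\dots,T+K$. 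I will arrange the conditional law so that over the shorter block the joint mode begins with the ``correct'' token $b:=y^\dagger_{T+1}$, while over the longer block the joint mode begins with a ``decoy'' token $a\ne b$, because $a$ is strongly correlated with a single high-probability continuation on positions $K+1,\dots,T+K$, whereas $b$'s continuations there are diffuse. This is exactly the phenomenon that a longer lookahead can flip the greedy first choice; for the smallest case $K=2,T=1$ it is simply the fact that the marginal mode of $y_2$ can be $b$ while the joint mode of $(y_2,y_3)$ begins with $a$. The hypothesis $K<L-T$ is what guarantees that both the block at positions $T+1,\dots,T+K$ and $K_{T+1}$'s second window at positions $T+2,\dots,T+1+K$ are genuine, non-truncated windows inside $[1,L]$. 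The remaining positions I will make deterministic given the correct prefix so that the whole $K_{T+1}$ greedy recursion reproduces $y^\dagger$.

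The main obstacle is that one distribution must simultaneously (i) make $y^\dagger$ the N-gram Hamming optimum with $y^\dagger_{T+1}=b$, (ii) make the length-$K$ block mode begin with the decoy $a$ so that $K_T$ deviates, and (iii) let the $K_{T+1}$ greedy path coincide with $y^\dagger$; these are coupled because the same conditionals determine the N-gram marginals entering $g(y)=\sum_i p^*(y_{i:i+N-1})$, the block joints, and the greedy recursion. The key to untangling them is that $g$ is governed by the \emph{marginal} popularity of N-grams, not by total sequence probability, whereas the greedy window argmax is governed by \emph{joint} block probability. I therefore plan to keep $b$ the marginal mode at position $T+1$ (and, more generally, keep each of $y^\dagger$'s N-grams individually most popular, checking $N=1$ first and then the general $N$ case) by spreading the $b$-mass over many diffuse tails, while concentrating the decoy $a$-mass on few blocks whose per-N-gram marginals stay below those of $y^\dagger$ yet whose joint probability over the length-$K$ block exceeds that of the $b$-block. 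Finally I will apply an arbitrarily small perturbation of the conditionals to remove every tie, securing $p^*\in\mathcal{P}$, and verify by a strictness/continuity argument that the perturbation preserves (i)--(iii). Verifying (i)--(iii) for the perturbed distribution, rather than setting up the correlation structure itself, is where the bulk of the routine work lies.
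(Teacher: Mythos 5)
Your proposal is correct and follows essentially the same route as the paper's proof: both build an explicit counterexample in which the decoders deterministically commit a common prefix of length $T$, and then exploit the misalignment of commitment windows --- $K_{T+1}$ fixes token $T+1$ from the window over positions $1,\dots,K$ while $K_T$ fixes it from the shifted window over positions $T+1,\dots,T+K$, whose $T$ extra positions are rigged so that a decoy block with a single concentrated continuation overtakes the correct block whose mass diffuses --- before verifying N-gram optimality of the $K_{T+1}$ output via per-position marginal dominance. The paper carries this out with concrete numerical distributions (treating $N<L$ and $N=L$ separately, since for $N=L$ optimality means being the sequence mode), which is exactly the routine instantiation your plan defers.
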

These proofs are left to Appendix \ref{not monotone} and  \ref{not monotone T}, respectively. For Proposition \ref{not monotone T prop}, when $K \geq L-T$, there can exist monotonicity. For example, in Appendix \ref{not monotone T} we show it for when $N=L$.

\subsection{Stochastic decoders}\label{stoachastic kghl}
Below we show any decoder that is stochastic in any way is not optimal, so long as there is a non-zero probability of choosing a sequence that is not optimal. The proof is left to Appendix \ref{sto not opt proof}.

\begin{prop}\label{sto not opt}
    Let our loss function be the N-gram Hamming loss. Let $p$ be a probability distribution over $\mathcal{X} \times \mathcal{Y}$. Then, any stochastic decoder that has a non-zero probability of outputting a $\hat{y} \in \mathcal{Y}$ where $\hat{y} \not\in \argmax_y g(y)$ for a set of inputs that have non-zero probability is not optimal.
\end{prop}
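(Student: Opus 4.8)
The plan is to reduce the expected risk to a per-input conditional risk, observe that this conditional risk is an affine (decreasing) function of $g$, and then show that assigning positive probability to any non-maximizer of $g$ forces a strictly positive penalty on a positive-measure set of inputs. First I would fix an input $x$ and, using the definition of the N-gram Hamming loss together with linearity of expectation, compute the conditional risk of a fixed prediction $\hat y$:
\[
\Evv{y \sim p \mid x}{\ell(\hat{y}, y)} = \sum_{i=1}^{L-N+1} \Prob{y_{i:i+N-1} \neq \hat{y}_{i:i+N-1} \mid x} = (L-N+1) - g(\hat{y}),
\]
where $g(\hat y) = \sum_{i=1}^{L-N+1} p(\hat y_{i:i+N-1}\mid x)$; call this quantity $r(\hat y,x)$. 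By Lemma \ref{optimal N-gram} the minimizers of $r(\cdot,x)$ are exactly $\argmax_y g(y)$, so the Bayes-optimal conditional risk is $r^*(x) := \min_{\hat y} r(\hat y,x)$, attained on $\argmax_y g(y)$, and $\inf_{h} R(h,p,\ell) = \Evv{x \sim p_x}{r^*(x)}$ (the infimum is achieved by a measurable selection $h(x) \in \argmax_y g(y)$, which exists because $\mathcal{Y}$ is finite).

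Next, because the decoder's internal randomness is independent of the true label $y$, the risk of the stochastic decoder factorizes over its output distribution:
\[
R(\mathcal{D}, p, p_{ntp}, \ell) = \Evv{x \sim p_x}{\sum_{\hat y \in \mathcal{Y}} p_{\mathcal{D}(p_{ntp})\mid x}(\hat y)\, r(\hat y, x)}.
\]
Subtracting the optimum and defining the pointwise regret $\rho(x) := \sum_{\hat y} p_{\mathcal{D}(p_{ntp})\mid x}(\hat y)\,\p{r(\hat y, x) - r^*(x)}$, I would note that every summand is non-negative since $r(\hat y,x) \geq r^*(x)$, whence $\rho(x) \geq 0$ for all $x$ and
\[
R(\mathcal{D}, p, p_{ntp}, \ell) - \inf_h R(h,p,\ell) = \Evv{x \sim p_x}{\rho(x)}.
\]

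Finally I would localize the strict inequality. Let $A$ be the set of inputs on which the decoder assigns positive probability to some $\hat y \notin \argmax_y g(y)$; by hypothesis $p_x(A) > 0$. For $x \in A$ there is a non-maximizer $\hat y$ with $p_{\mathcal{D}(p_{ntp})\mid x}(\hat y) > 0$ and, since $g(\hat y) < \max_y g(y)$, with $r(\hat y,x) - r^*(x) > 0$; hence $\rho(x) > 0$ on $A$. A non-negative function that is strictly positive on a set of positive measure has strictly positive integral (write $A = \bigcup_n \{\rho > 1/n\}$ and select an $n$ for which $p_x(\{\rho > 1/n\}) > 0$), so $\Evv{x \sim p_x}{\rho(x)} > 0$ and therefore $R(\mathcal{D}, p, p_{ntp}, \ell) > \inf_h R(h,p,\ell)$, i.e. $\mathcal{D}$ is not optimal.

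The computations are routine; the only step requiring genuine care is this last measure-theoretic one, where the positivity of $p_x(A)$ must be converted into strict positivity of the integral, which is precisely where the ``positive-measure set of inputs'' hypothesis is used (on a null set of bad inputs the decoder could place mass on non-maximizers without penalty). I would also make explicit the independence of the decoder's randomness from $y$, since it is what licenses factoring the risk into the decoder's output distribution times the conditional risk $r(\hat y,x)$.
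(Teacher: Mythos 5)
Your proof is correct and follows essentially the same route as the paper's: invoke Lemma \ref{optimal N-gram}, factor the conditional risk over the decoder's output distribution as a convex combination of the per-output risks, and conclude that any mass on a non-maximizer of $g$ is suboptimal. The only difference is that you carry out the final integration over $x$ and the conversion of ``positive probability on a positive-measure set'' into a strictly positive risk gap explicitly, a measure-theoretic step the paper's proof leaves implicit after observing the pointwise claim for arbitrary $x$.
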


\subsubsection{Random sampling and temperature-scaled random sampling}
We show in Appendix \ref{rs and tsrs sto opt} that both of these decoding algorithms meet the criteria for Proposition \ref{convergence prop}. Thus, since each of the runtimes of these scale linearly with $L$, by Proposition \ref{convergence prop} and Theorem \ref{exptime}, we have that neither of these decoders are consistent for all probability distributions. In fact, so long as $\gamma \neq \infty$, neither of these are consistent for any non-uniform or non-deterministic probability distribution in $\mathcal{P}$ by Proposition \ref{sto not opt}.
\section{Consistency for sample generation} \label{CE section}
One valid goal of a large language model is to sample responses from the distribution of human speech \cite{paass2023foundation}. We know that for any two probability distributions, minimizing cross entropy implies they will be equal to each other. Therefore, if we want our goal to be sampling from a true sequence probability distribution, the cross entropy loss on the entire sequence is a natural choice. In this section we will not only use cross entropy to train our next-token predictor, but also use it as a loss function for our sequential output. 

Given a decoding algorithm $\mathcal{D}$ and input $x \in \mathcal{X}$, the cross entropy loss is defined as follows:
\[CE(p^*\mid x,\ p_{\mathcal{D}(p_{ntp})\mid x}) = \Evv{y\sim p^*\mid x}{-\logp{p_{\mathcal{D}(p_{ntp})\mid x}(y)}}.\]


\subsection{Deterministic decoders}
For deterministic decoding algorithms, it is easy to see by the definition of cross entropy that for any non-deterministic probability distribution, these decoding algorithms will have infinite cross entropy. Thus, they are not consistent.

\subsection{Random sampling}\label{ce rs}
In the N-gram Hamming setting we saw that random sampling is not consistent for all non-deterministic or non-uniform probability distributions. Here, however, we will show it is consistent for every probability distribution. The proof is left to Appendix \ref{proof rs opt}.
\begin{prop}\label{rs opt}
    Random sampling is always consistent under the cross entropy loss function in our setting.
\end{prop}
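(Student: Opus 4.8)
The plan is to exploit the fact that random sampling reproduces the next-token predictor's induced sequence distribution exactly, and then reduce the cross-entropy risk to a fixed entropy term plus a KL-divergence term that vanishes by Assumption \ref{ntp assumption}.

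First I would observe that random sampling draws each token from the conditional distributions of $p_{ntp}$, so by the chain-rule factorization of the induced joint distribution into exactly these conditionals, its output distribution is precisely $p_{\mathcal{D}(p_{ntp})\mid x} = p_{ntp}\mid x$ for every $x$. Substituting this into the cross entropy and using the standard decomposition, for a fixed $x$ we have
\[CE(p^*\mid x,\ p_{ntp}^i\mid x) = H(p^*\mid x) + \mathrm{KL}\p{p^*\mid x \,\big\|\, p_{ntp}^i\mid x},\]
where $H$ denotes Shannon entropy. Taking the expectation over $x\sim p_x$ and recalling that $p^*$ and $p_{ntp}^i$ share the same $x$-marginal $p_x$, the risk becomes
\[R(\mathcal{D}, p^*, p_{ntp}^i, CE) = \Evv{x\sim p_x}{H(p^*\mid x)} + \mathrm{KL}\p{p^*\,\big\|\,p_{ntp}^i},\]
the final term being the full joint KL divergence over $\mathcal{X}\times\mathcal{Y}$.

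Next I would identify the target of consistency. Since $CE(p^*\mid x, q)$ is minimized over all output distributions $q$ exactly at $q = p^*\mid x$, with minimal value $H(p^*\mid x)$, the best attainable risk is $\Evv{x\sim p_x}{H(p^*\mid x)}$, which is the relevant infimum in this section. Consistency therefore reduces entirely to showing that the joint KL term vanishes. This follows from Assumption \ref{ntp assumption} together with the appendix lifting result: conditional convergence in KL, via the chain rule for KL divergence and the matching $x$-marginals, yields $p_{ntp}^i \to p^*$ in KL on the whole sequence, so $\mathrm{KL}(p^*\,\|\,p_{ntp}^i)\to 0$. Hence $R(\mathcal{D}, p^*, p_{ntp}^i, CE)\to \Evv{x}{H(p^*\mid x)}$, which is exactly the claimed consistency.

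The main subtlety to flag is that cross entropy is unbounded, so Proposition \ref{convergence prop} (which requires an $M$-bounded loss) cannot be invoked here; the convergence must instead be read off directly from the entropy-plus-KL decomposition. The only delicate point is ensuring that the \emph{conditional} KL convergence of Assumption \ref{ntp assumption} genuinely produces convergence of the \emph{joint} KL, including the expectation over both $x$ and the prefix $y_{[j-1]}$. This is precisely where the chain rule for KL divergence and the shared input marginal do the work, and it is the one place the appendix lifting result is essential rather than cosmetic.
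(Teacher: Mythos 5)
Your proposal is correct and follows essentially the same route as the paper's proof: identify the random-sampling output distribution with $p_{ntp}^i$ via the chain rule, decompose the cross entropy as entropy plus KL divergence, and let the KL term vanish by Assumption \ref{ntp assumption} together with the appendix result lifting conditional KL convergence to joint KL convergence. Your explicit handling of the expectation over $x$ and your remark that Proposition \ref{convergence prop} is unavailable for an unbounded loss are welcome clarifications but do not change the argument.
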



\subsection{Temperature-scaled random sampling}
From the previous section we can see that when $\gamma=1$, we know temperature-scaled random sampling is consistent with any true probability distribution. However, below we show it is not consistent for nearly all true probability distributions when $\gamma \neq 1$. 

\begin{prop}\label{temp scaled not optimal}
    When $\gamma \neq 1$, temperature-scaled random sampling is only optimal for uniform or deterministic distributions.
\end{prop}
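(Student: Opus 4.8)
The plan is to exploit the uniqueness of the cross-entropy minimizer. For a fixed input $x$, regard $CE(p^*\mid x, q)$ as a functional of the decoder's output law $q = p_{\mathcal{D}(p_{ntp})\mid x}$. Gibbs' inequality gives $CE(p^*\mid x, q) = H(p^*\mid x) + KL(p^*\mid x \,\|\, q) \ge H(p^*\mid x)$, with equality iff $q = p^*\mid x$. By Proposition~\ref{rs opt} random sampling attains this lower bound, so $H(p^*\mid x)$ is the optimal value and a decoder is optimal exactly when its output law equals $p^*\mid x$ for $p^*_x$-almost every $x$. Since temperature-scaled random sampling satisfies the hypotheses of Proposition~\ref{convergence prop} (Appendix~\ref{rs and tsrs sto opt}), I may pass to the limit and work directly with the limiting output law $p_{\mathcal{D}(p^*)\mid x}$ induced by the true next-token predictor. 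The proposition then reduces to identifying those $p^*$ for which this temperature-scaled law equals $p^*\mid x$.

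Next I would pass from full-sequence equality to a node-by-node condition. Both the true law and the temperature-scaled output law factor along the sequence by the chain rule, the latter as $p_{\mathcal{D}(p^*)\mid x}(y) = \prod_{i=1}^L p_{scaled}(y_i\mid y_{[i-1]},x)$. By uniqueness of the conditional factorization, these two product measures coincide iff $p_{scaled}(\cdot\mid y_{[i-1]},x) = p^*(\cdot\mid y_{[i-1]},x)$ at every prefix $y_{[i-1]}$ reachable with positive probability; the null-character convention forces both conditionals to be the point mass at $*$ once a $*$ appears, so the trailing factors agree automatically and only genuine prefixes matter. Thus optimality is equivalent to every reachable conditional being a fixed point of the temperature map.

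It then remains to solve the fixed-point equation. Fixing a context and writing the conditional as $(p_1,\dots,p_{|\mathcal{V}|})$, the condition $p_j^\gamma / \sum_{v}p_v^\gamma = p_j$ yields, for every $j$ in the support, $p_j^{\gamma-1} = \sum_{v}p_v^\gamma =: Z$, a constant independent of $j$. Since $\gamma\neq 1$ the map $t\mapsto t^{\gamma-1}$ is strictly monotone, hence injective, on $(0,\infty)$, so all $p_j$ on the support must be equal; that is, the conditional is uniform on its support, with a point mass arising as the degenerate (deterministic) case. Conversely a direct substitution shows any uniform-on-support conditional is fixed by the temperature map. Assembling these node conditions gives precisely the distributions the statement calls uniform or deterministic.

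The only genuine obstacle is bookkeeping rather than a deep idea. One must check that ``optimal for a set of inputs of positive measure'' is correctly handled by the $p^*_x$-a.e. argument, and that the reduction to conditionals respects the reachability and support structure (including the padding convention, which is what keeps the trailing factors from interfering). The algebraic heart—that $\gamma\neq 1$ makes uniform-on-support the \emph{unique} fixed point via injectivity of $t\mapsto t^{\gamma-1}$—is short; the care lies in translating global cross-entropy optimality into this local fixed-point statement and in reconciling ``uniform on support'' with the paper's phrasing ``uniform or deterministic.''
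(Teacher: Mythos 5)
Your proposal is correct and follows essentially the same route as the paper's proof: both reduce optimality to the fixed-point condition $p^\gamma = p$ via the decomposition of cross entropy into entropy plus KL-divergence, localize the condition to the reachable conditional distributions, and solve the resulting algebraic identity, with your step $p_j^{\gamma-1} = Z$ (constant) plus injectivity of $t\mapsto t^{\gamma-1}$ being a rephrasing of the paper's ratio equation $\frac{p(v_s\mid y_{[i-1]})}{p(v_r\mid y_{[i-1]})} = \left(\frac{p(v_s\mid y_{[i-1]})}{p(v_r\mid y_{[i-1]})}\right)^\gamma$. The ``uniform on support'' caveat you flag is genuine but appears in the paper's own conclusion in the same form, so it is not a gap relative to the published argument.
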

The proof is left to Appendix \ref{temp scaled app not optimal}.

We next show how the expected risk increases as $\gamma$ changes. For convenience, we assume below that for $\mathcal{Y} = \mathcal{V}^{L}$. The proof method works in generality, however, the resulting $\log(|V|)$ term in the lower bound would be much less interpretable. A more in-depth explanation is given at the end of Appendix \ref{temp scaled asymptotics app}.

\begin{prop}\label{temp scaled asymptotics}
    Let $p$ be a probability distribution over $\mathcal{Y}$ and let $p^\gamma$ be our temperature-scaled random sampled distribution with respect to $p$. Let $opt$ be the minimum expected cross entropy obtainable with respect to $p$. Then, there exists constants $C_1, C_2, C_3 \in \Z_+$ that depend only on $p$ such that we have:
    
    For $\gamma > 1$:
    \begin{align}
    \gamma C_1 - opt \leq \Evv{y \sim p}{-\logp{p^\gamma(y)}} - opt\leq \gamma C_3 + L\logp{|V|} - opt.
    \end{align}

    For $\gamma < 1$:
    \begin{align}
        (L\logp{|V|} - opt) -\gamma C_2\leq \Evv{y \sim p}{-\logp{p^\gamma(y)}} - opt \leq (L\logp{|V|} - opt) + \gamma C_3.
        \end{align}
\end{prop}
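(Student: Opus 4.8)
The plan is to reduce the whole statement to a clean one-line decomposition of the expected cross entropy and then lean on convexity of a scalar auxiliary function. Because temperature scaling is applied token-by-token, the induced sequence distribution factorizes as $p^\gamma(y) = \prod_{i=1}^L \frac{p(y_i\mid y_{[i-1]})^\gamma}{\sum_{v\in\mathcal V} p(v\mid y_{[i-1]})^\gamma}$. Taking $-\log$ and then $\Evv{y\sim p}{\cdot}$, the numerators contribute $-\gamma\Evv{y\sim p}{\log p(y)} = \gamma H(p)$ (with $H(p)$ the entropy of $p$) and the denominators contribute a ``partition'' term, so I would first record
\[\Evv{y\sim p}{-\logp{p^\gamma(y)}} = \gamma H(p) + Z(\gamma),\qquad Z(\gamma) := \Evv{y\sim p}{\sum_{i=1}^L \logp{\sum_{v\in\mathcal V} p(v\mid y_{[i-1]})^\gamma}}.\]
Since Gibbs' inequality minimizes cross entropy at the true distribution, and $q=p$ is exactly the $\gamma=1$ member of the family, this gives $Z(1)=0$ and identifies $opt = H(p)$.

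The engine is the scalar function $f_q(\gamma) = \logp{\sum_{v} q(v)^\gamma}$ attached to each conditional $q = p(\cdot\mid h)$. I would establish: $f_q$ is the log-sum-exp of $\gamma\log q(v)$ and hence \emph{convex} in $\gamma$; $f_q(1)=0$; $f_q$ is nonincreasing (its derivative is a $q^\gamma$-weighted average of the $\log q(v)\le0$); $f_q(0)=\logp{\lvert\mathrm{supp}(q)\rvert}$; and $f_q(\gamma)\ge \gamma\log\max_v q(v)$ by keeping only the largest summand. Under $\mathcal Y=\mathcal V^L$ with full support these give $Z(0)=L\logp{|\mathcal V|}$, which is the value the cross entropy takes at $\gamma=0$.

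For $\gamma>1$ I would sandwich $Z$ by linear functions. The upper side uses $\sum_v q(v)^\gamma\le\sum_v q(v)=1$, so each $f_q(\gamma)\le0$, giving $Z(\gamma)\le0$ and $\Evv{y\sim p}{-\logp{p^\gamma(y)}}\le\gamma H(p)$; taking $C_3=H(p)$ yields the stated upper bound (the extra $L\logp{|\mathcal V|}\ge0$ is slack). The lower side uses $f_q(\gamma)\ge\gamma\log\max_v q(v)$, giving $Z(\gamma)\ge\gamma B$ with $B=\Evv{y\sim p}{\sum_i\log\max_v p(v\mid y_{[i-1]})}$, so $\Evv{y\sim p}{-\logp{p^\gamma(y)}}\ge\gamma C_1$ with $C_1=H(p)+B=\Evv{y\sim p}{\sum_i\log\frac{\max_v p(v\mid y_{[i-1]})}{p(y_i\mid y_{[i-1]})}}\ge0$, strictly positive exactly when $p$ is non-deterministic, consistent with Proposition \ref{temp scaled not optimal}. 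For $0<\gamma<1$ I would measure deviation from $Z(0)=L\logp{|\mathcal V|}$: monotonicity of each $f_q$ gives $Z(\gamma)\le Z(0)$, hence the upper bound $L\logp{|\mathcal V|}+\gamma H(p)$ with $C_3=H(p)$; and the tangent to the convex $f_q$ at $\gamma=0$, namely $f_q(\gamma)\ge f_q(0)+\gamma f_q'(0)$ with $f_q'(0)=\frac{1}{\lvert\mathrm{supp}(q)\rvert}\sum_{v\in\mathrm{supp}(q)}\log q(v)\le0$, yields $Z(\gamma)\ge L\logp{|\mathcal V|}+\gamma E$ for an explicit $E\le0$, so choosing $C_2$ to dominate $-(H(p)+E)$ gives the lower bound $L\logp{|\mathcal V|}-\gamma C_2$. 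Subtracting $opt=H(p)$ throughout rewrites all four inequalities in the stated form.

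The main obstacle is the $\gamma<1$ lower bound: it is the only place where the value $L\logp{|\mathcal V|}$ must be attained exactly at $\gamma=0$, which requires the full-support assumption hidden inside $\mathcal Y=\mathcal V^L$. Without it one only gets $Z(0)=\Evv{y\sim p}{\sum_i\log\lvert\mathrm{supp}(p(\cdot\mid y_{[i-1]}))\rvert}<L\logp{|\mathcal V|}$, and no bound of the form $L\logp{|\mathcal V|}-\gamma C_2$ can survive for small $\gamma$; this is precisely the interpretability caveat flagged before the statement. The remaining work is bookkeeping: verifying that $C_1,C_2,C_3$ depend only on $p$ and can be taken positive, for which the identities above make each constant explicit.
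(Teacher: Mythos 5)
Your proof is correct and follows essentially the same route as the paper's: both expand $-\logp{p^\gamma(y)}$ into $-\gamma\logp{p(y)}$ plus the per-token log-partition terms $\logp{\sum_{v\in\mathcal{V}}p(v\mid y_{[i-1]})^\gamma}$ and then sandwich each such term between affine functions of $\gamma$. The only real difference is how those affine bounds are obtained — you exploit convexity and monotonicity of $f_q(\gamma)=\logp{\sum_v q(v)^\gamma}$ (the bound $f_q\le 0$ for $\gamma\ge 1$ and the tangent at $\gamma=0$), whereas the paper uses the cruder $\logp{|\mathcal{V}|\max_v q(v)^\gamma}$ and $\logp{|\mathcal{V}|\min_v q(v)^\gamma}$ estimates; both yield valid constants, and your framing has the side benefits of identifying $opt=H(p)$, making $C_1,C_2,C_3$ explicit, and surfacing the same full-support requirement that the paper only flags informally after its proof.
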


We leave the proof to Appendix \ref{temp scaled asymptotics app}.

We require two bounds due to how the behavior changes from when $\gamma<1$ to $\gamma>1$. We know that as $\gamma$ approaches $\infty$, our distribution becomes closer and closer to a point mass, thus our cross entropy goes to infinity. The inequalities set forth in $(1)$ show our loss goes to infinity at a rate of $\gamma$.

As $\gamma$ approaches $0$, we know our distribution will get closer and closer to the uniform distribution. Thus, our cross entropy should go to $-\logp{|V|^{-L}} = L \logp{|V|}$, which we can see through $(2)$ that it also goes to this as well at a rate of $\gamma$.

From this, we see temperature scaling behaves asymptotically as is expected in its scaling parameter and it does so linearly. 





\section{Conclusion and future work}
In this paper we explored the interplay between next-token prediction and the decoding algorithms on the one hand and different end goals on the other. Adopting an asymptotic viewpoint, we find that many of the decoding algorithms explored are not consistent for a vast majority of goals and probability distributions. Further, we find evidence that we might be asking too much of our the decoding algorithms.

Each goal we explore has one class (deterministic vs. stochastic) of decoding algorithms that, except for degenerate cases, are not consistent. What is interesting is that they flip depending on the goal. The N-gram Hamming loss can be thought of as trying to be ``correct''; you need to always guess the right answer if possible, hence why randomness hurts. However, when trying to mimic a distribution, this randomness is necessary. {\em A practical implication of our insights is that the user intent should determine the decoding strategy to be used at test time.} There has been a recent line of work on adaptive decoding strategies \citep{zhu2024hot, zhu2024improving, dhuliawala2024adaptive}. Each of these works change the decoder output distribution to be more or less determinsitic-like depending on some criteria which aligns itself with the  information retrieval versus creative generation dichotomy. Therefore, our theoretical results are consistent with these recent empirical findings.

We believe there is a lot of interesting future work to be done in this area. One can go deeper into theoretical analysis for these or other loss functions. We particularly found theoretical analysis of the N-gram Hamming loss to be difficult due to the limited methods that can be used to manipulate indicator functions. Future work could also make use of more domain specific assumptions on probability distributions, such as power laws. Other work can include investigating other decoding algorithms, such as Top-K or nucleus sampling, or go even deeper into a specific decoder, such as temperature-scaled random sampling. One can also make the role of stochastic gradient descent more explicit in the training of next-token prediction and investigate if there are any differences that this could cause.

\section*{Acknowledgments}
We thank Tyson Trauger for his useful discussions on the proof of Proposition \ref{temp scaled asymptotics}.
\bibliographystyle{apalike}  
\bibliography{references}  



\appendix

\section{Notes}

\subsection{Random sampling algorithm}\label{rs algo}

\begin{algorithm}[H]
\caption{Random Sampling}\label{alg:cap}
\begin{algorithmic}
\Require $L \in \N$, $p_{ntp}(\cdot \mid \cdot)$, Vocabulary $\mathcal{V}$
\State $y \gets``"$
\While{$\text{length}(y) < L$}
    \State $y_{new} \sim p_{ntp}(\cdot \mid y)$
    \State $y \gets y + y_{new}$
\EndWhile
\end{algorithmic}
\end{algorithm}

\subsection{Temperature scaled random sampling algorithm}\label{tsrs algo}

\begin{algorithm}[H]
\caption{Temperature Scaled Random Sampling}\label{alg:cap}
\begin{algorithmic}
\Require $L \in \N$, $\gamma > 0$, $p_{ntp}(\cdot \mid \cdot)$, Vocabulary $\mathcal{V}$
\State $y \gets``"$
\While{$\text{length}(y) < L$}
    \State $p_{\gamma}(v\mid y) = \frac{p_{ntp}(v\mid y)^\gamma}{\sum_{u \in \mathcal{V}}p_{ntp}(u\mid y)^\gamma}$
    \State $y_{new} \sim p_\gamma(\cdot \mid y)$
    \State $y \gets y + y_{new}$
\EndWhile
\end{algorithmic}
\end{algorithm}

\subsection{Simulation study}\label{sim study}
To investigate the optimality of $K_T$-lookahead decoding, we run a simple simulation study. The methodology went as follows:
\begin{enumerate}
    \item Create a probability distribution over an alphabet. We do this by creating a Markov chain with $m$ nodes and our sequences are a $L$ length paths along this chain.
    \begin{enumerate}
        \item We create this Markov chain by having a starting distribution and its transition probabilities for each node be Dirichlet($\alpha_1,\dots\alpha_n$) distributed where $\alpha_1=\dots=\alpha_n = \alpha$.
    \end{enumerate}
    
    \item Once a graph is created, for values of $L \in \{2,4,6,8\}$, $N,K \in \{1,2,4,6,8\}$, $N,K \leq L$, we find which $L$ length path is optimal for the Markov chain (representing the optimal sequence) and then see if our $K_T$-lookahead algorithm finds the optimal case. 
    
    \item For $\alpha \in \{.1,.25,.5,.75,1,10\}$ and $m \in \{2,4,6,8\}$, do the above steps for each ($\alpha$, $m$) pair $200$ times and group them based on ($\alpha$, $m$).
    
    \item For each grouping, calculate the average KL-divergence of the sequence distribution with respect to the uniform distribution and calculate the average amount of times $K_T$-lookahead was $N$-gram Hamming optimal for length $L$. This is what is shown in the plots.
\end{enumerate}

We chose the maximum amount of nodes $m$ and sequence length $L$ to be $8$ as there are $8^8$ different sequences at the maximum (about 16 million). We show in Corollary \ref{exp time cor} that there is no polynomial-time optimal decoder, thus we resort to brute force to go through all the combinations. Even with the use of a GPU, this still takes about 11 hours since we try every different $\alpha, m,N,K,L$ combinations described above. We also sped up computation by using memoization, but due to the exponential nature of increasing nodes or sequence length, we would start to run into memory issues if we made the length or amount of nodes larger.

The CPU used to run the simulations was an Intel 9th generation i7 and the GPU was an NVIDIA Geforce GTX 1660 Ti. The code itself was written in Python. The only non-standard package used was Numba \citep{lam2015numba}. This package allows for Python code to be compiled, allowing for better wall runtime for our code. We also use its CUDA support to be able to interact with our GPU. The code for the simulations will be on Github and the link will be here if the paper is accepted. It is not here now for anonymity reasons.

For ties, we were unable to come up with a setting that would never have ties in the $K_T$-lookahead $\argmax$. These ties come from if there is a reordering of the maximum path such that each node still proceeds to the same next node, they just do so in a different order (e.g. $15717$ would output the same probability as $17157$). We also note that floating point multiplication is non-associative as well, which would also make there be no ties when there should be in some cases. The way we try to remedy this is by rounding $g(y)$ for every output $y$ to 15 decimal places and then choosing the $\argmax$ over those. If $K_T$-lookahead outputted a path that was in the $\argmax$, we considered it optimal.

We know that $K_1$-lookahead when $K=N=L$ should always produce the correct result. Thus, we thought it reasonable to see the impact of these ties by counting how many incorrect sequences were found by $K_1$-lookahead when $K=L=N$ on $200$ trials of the simulation above. We found that $K_1$-lookahead was at most $3.5\%$ unoptimal over all $\alpha$s, number of nodes, and values of $K=N=L$. All figures in this paper are accounting for ties as described in the previous paragraph.

In Figure \ref{fig:k1look_all} we give the a set of plots that shows $K_1$-lookahead optimality over all parameters in the simulation.
\begin{figure}[h]
    \centering
    \includegraphics[width=\textwidth]{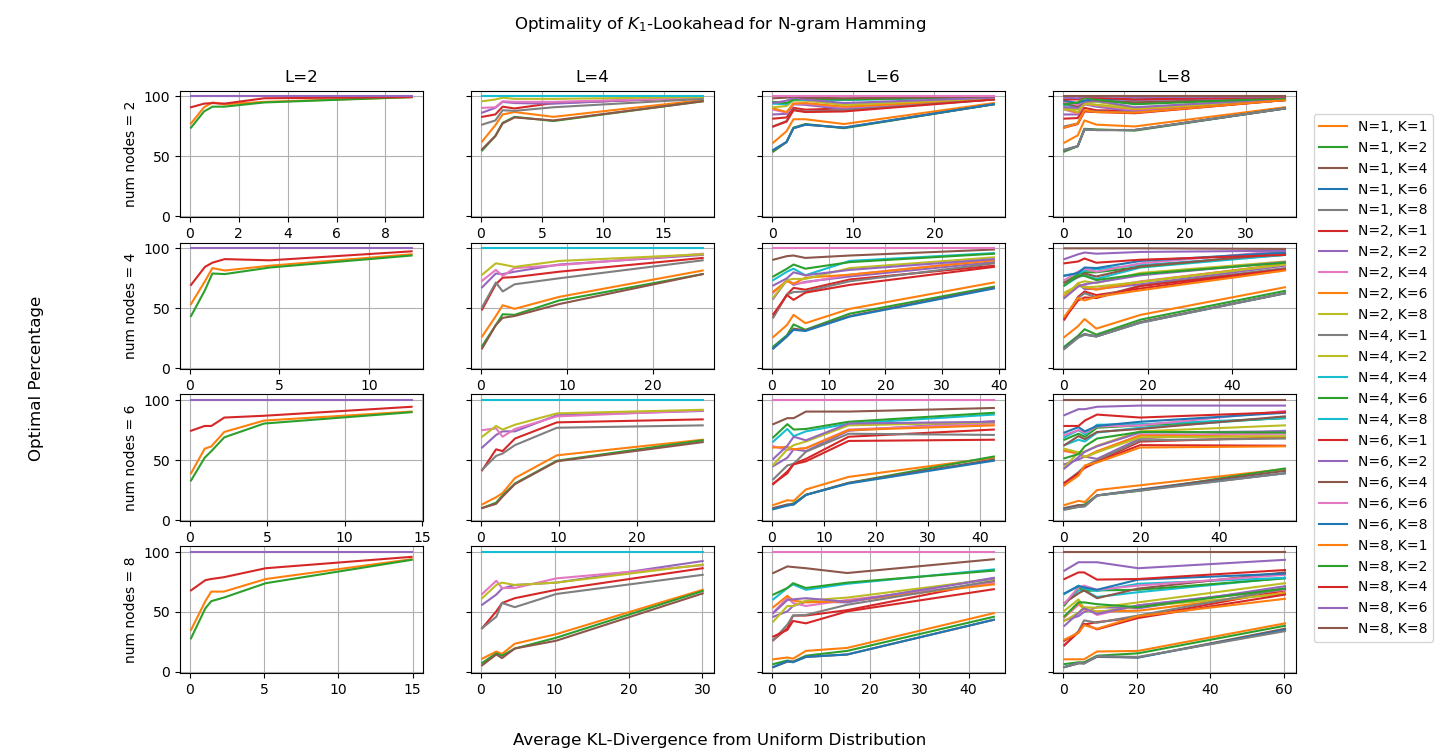}
    \caption{A plot of all trials of $K_1$-lookahead decoding being optimal for $N$-gram Hamming on a Markov chain with ``num nodes'' amount of nodes. The amount of nodes increases as one looks down the rows and the sequence length increases as one goes down the columns. Each line represents a specific $K_1$-lookahead being compared to the optimal $N$-gram Hamming. The $x$ and $y$ axes denote the same that has been used in Figures \ref{fig:klook_1gh} and \ref{fig:klook_Lgh}.}
    \label{fig:k1look_all}
\end{figure}

From this, we can see that the larger $N$ gets, generally the better our $K_1$-lookahead does. Since $K_T$-lookahead is choosing to walk along the path of maximum probability for each iteration, it makes sense that larger $N$s would reward this as the $N$-gram Hamming loss gets closer and closer to the $0-1$ loss. Smaller $N$s are more concerned with the marginal probabilities at each index, something that $K_T$-lookahead does not directly concern itself with.

For completeness, we give similar plots seen for $K_1$-lookahead, but for $K_K$-lookahead in Figures \ref{fig:kklook_1gh}, \ref{fig:kklook_Lgh}, and \ref{fig:kklook_all}.

\begin{figure}[!h]
    \centering
    \includegraphics[width=\textwidth]{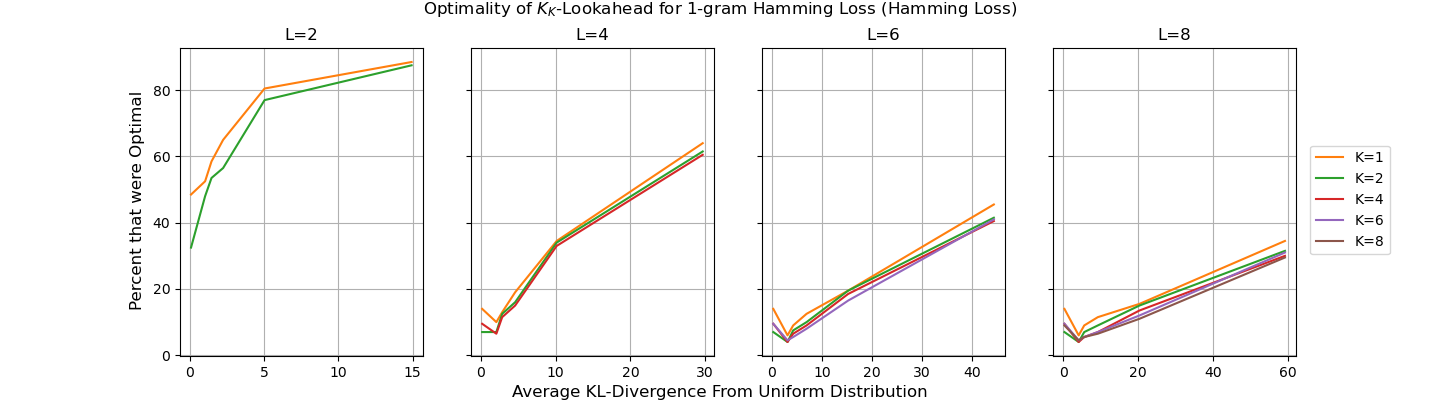}
    \caption{A plot of the amount of trials $K_K$-lookahead was optimal for the 1-gram Hamming loss (the Hamming loss). There were 8 nodes in each Markov chain and the sequence length goes up by two as one moves right in the plots.}
    \label{fig:kklook_1gh}
\end{figure}

\begin{figure}[!h]
    \centering
    \includegraphics[width=\textwidth]{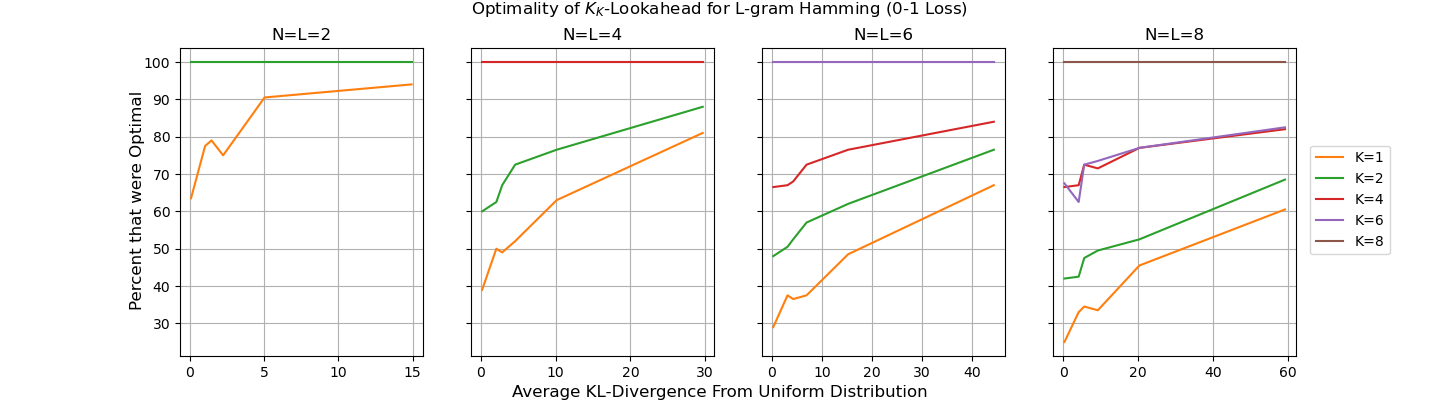}
    \caption{A plot of the amount of trials $K_K$-lookahead was optimal for the $L$-gram Hamming loss (the $0-1$ loss). The same setup as Figure \ref{fig:kklook_1gh} otherwise.}
    \label{fig:kklook_Lgh}
\end{figure}

\begin{figure}[!h]
    \centering
    \includegraphics[width=\textwidth]{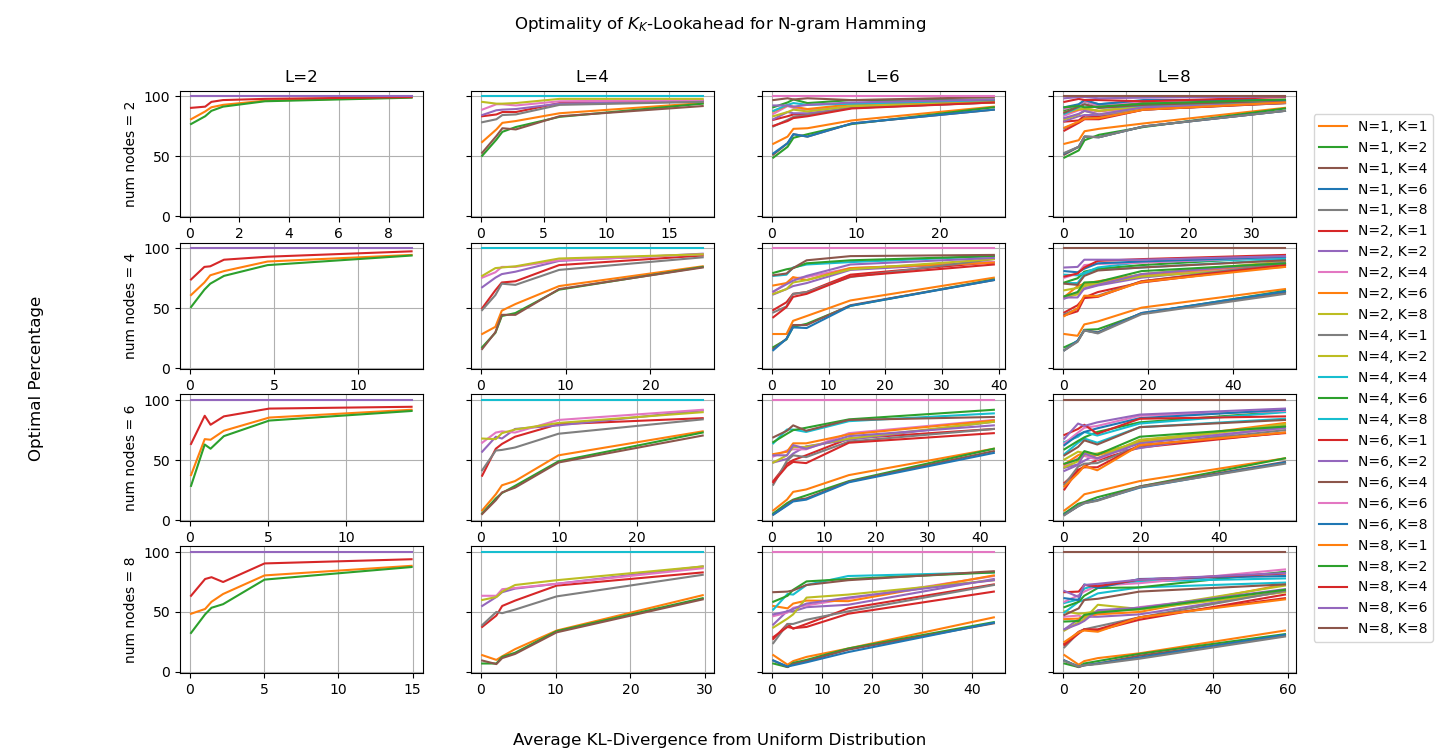}
    \caption{A plot of all trials of $K_K$-lookahead decoding being optimal for $N$-gram Hamming on a Markov chain with ``num nodes'' amount of nodes. The amount of nodes increases as one looks down the rows and the sequence length increases as one goes down the columns. Each line represents a specific $K_K$-lookahead being compared to the optimal $N$-gram Hamming. The $x$ and $y$ axes denote the same that has been used in Figures \ref{fig:kklook_1gh} and \ref{fig:kklook_Lgh}.}
    \label{fig:kklook_all}
\end{figure}

We can see that every empirical claim made with the $K_1$-lookahead plots can also be made here. Further, we see that $K_1$ and $K_K$-lookahead both output similar looking plots, with a possible slight edge towards $K_1$ in optimality. We do not make any claims that one is better than the other. For any $T_1, T_2$ where $T_1 < T_2$, $K_{T_1}$ is going to require more compute time that $K_{T_2}$. Even though the probability distributions studied here are quite simple, a further investigation into if this extra computation is needed would be interesting given the similarity of optimality plots between $K_1$ and $K_K$-lookahead decoding.

\section{Proofs and examples}
\subsection{Proof that temperature scaling is equivalent to our formulation}\label{temp scaling}
Let $\gamma = 1/T$ and let the set $Z$ be our logits. Then, we have that
\[p(y_i \mid y_{[i-1]}) = \frac{e^{z_i}}{\sum_{z_j \in Z}e^{z_j}}\]
Then, we have
\begin{align*}
    &\frac{p(y_i \mid y_{[i-1]})^\gamma}{\sum_{v \in \mathcal{V}}p(v \mid y_{[i-1]})^\gamma} = \frac{\frac{e^{z_i/T}}{\p{\sum_{z_j \in Z}e^{z_j}}^{1/T}}}{\sum_{z_r \in Z}\frac{e^{z_r/T}}{\p{\sum_{z_j \in Z}e^{z_j}}^{1/T}}} =\\
    &\frac{\frac{1}{\p{\sum_{z_j \in Z}e^{z_j}}^{1/T}}e^{z_i/T}}{\frac{1}{\p{\sum_{z_j \in Z}e^{z_j}}^{1/T}}\sum_{z_r \in Z}e^{z_r/T}} = \frac{e^{z_i/T}}{\sum_{z_r \in Z}e^{z_r/T}}
\end{align*}
This last value is how temperature scaling is implemented. 

We do note this $T$ is different than the $T$ used in the rest of the paper for $K_T$-lookahead. We use $T$ here to represent the softmax temperature as it is the standard notation for it, but nowhere else in this paper do we use it to represent temperature. 

\subsection{Proof of assertion in Assumption \ref{ntp assumption}}\label{ntp kl div assumption app}
\begin{lemma}
    Suppose we have $\forall i \in [L]$, $\forall y_{[i]} \in \mathcal{Y}_{[i]}$, and $\forall v \in \mathcal{V}$,
    \[p_{ntp}^{i}(v \mid y_{[i]}) \rightarrow p^*(v \mid y_{[i]}).\]
    Then $KL(p^*||p_{ntp}^{i}) \rightarrow 0$.
\end{lemma}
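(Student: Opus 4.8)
The plan is to reduce the statement about the joint distributions over full sequences to the hypothesis about conditional next-token distributions by way of the chain rule (tensorization) for KL-divergence. Recall that for any two distributions over sequences of length $L$ one has the exact decomposition
\[
KL\p{p^* \| p_{ntp}^i} = \sum_{j=1}^{L} \Evv{y_{[j-1]} \sim p^*}{KL\p{p^*(\cdot \mid y_{[j-1]}) \| p_{ntp}^i(\cdot \mid y_{[j-1]})}},
\]
where the outer expectation is a finite weighted average over prefixes $y_{[j-1]} \in \mathcal{Y}_{[j-1]}$ with weights $p^*(y_{[j-1]})$ that do not depend on the iteration $i$ (here $i$ is the training-iteration superscript and $j$ indexes position, to avoid overloading). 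First I would establish this identity, which follows by induction on $L$ from the two-variable chain rule $KL(p(a,b)\|q(a,b)) = KL(p(a)\|q(a)) + \Evv{a\sim p}{KL(p(b\mid a)\|q(b\mid a))}$. This converts the global quantity of interest into a finite sum of conditional KL terms of exactly the kind the hypothesis controls.

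Next I would argue that each inner conditional term tends to $0$. Fix a position $j$ and a prefix $y_{[j-1]}$; since $\mathcal{V}$ is finite and $j\le L$, there are only finitely many such prefixes. Writing the conditional KL explicitly as $\sum_{v\in\mathcal{V}} p^*(v\mid y_{[j-1]}) \log\frac{p^*(v\mid y_{[j-1]})}{p_{ntp}^i(v\mid y_{[j-1]})}$, I would check convergence termwise: for $v$ with $p^*(v\mid y_{[j-1]})>0$ the hypothesis $p_{ntp}^i(v\mid y_{[j-1]})\to p^*(v\mid y_{[j-1]})$ forces the summand to $p^*(v\mid y_{[j-1]})\log 1 = 0$, while for $v$ with $p^*(v\mid y_{[j-1]})=0$ the summand is identically $0$ under the usual convention. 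Hence each conditional KL converges to $0$.

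Finally I would assemble the pieces: because both the number of positions ($L$) and the number of prefixes are finite, and the prefix weights $p^*(y_{[j-1]})$ are fixed constants in $[0,1]$, the limit passes through the finite sum and the finite expectation, yielding $KL(p^*\|p_{ntp}^i)\to 0$. Padded positions (prefixes already containing $*$) contribute nothing, since by the setup both $p^*$ and $p_{ntp}^i$ place all mass on $*$ there, making those conditional KL terms exactly $0$.

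The main obstacle is continuity at the boundary of the probability simplex: one must rule out a conditional term blowing up. The only way $p^*(v\mid y_{[j-1]})\log\frac{p^*}{p_{ntp}^i}$ could diverge is if $p_{ntp}^i(v\mid y_{[j-1]})\to 0$ while $p^*(v\mid y_{[j-1]})>0$, but this is precluded by the hypothesis $p_{ntp}^i\to p^*$; the complementary case $p^*=0$ is harmless by convention. Establishing the chain-rule decomposition cleanly and handling these simplex-boundary edge cases are the only points requiring care, after which finiteness of $\mathcal{Y}$ makes the interchange of limit and (finite) summation immediate.
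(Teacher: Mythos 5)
Your proposal is correct and follows essentially the same route as the paper: both decompose $KL(p^*\|p_{ntp}^i)$ via the chain rule into a finite, $i$-independent weighted sum of conditional KL-divergences over prefixes, show each conditional term vanishes under the hypothesis, and pass the limit through the finite sum. Your extra care at the simplex boundary (the $p^*=0$ convention and ruling out blow-up when $p^*>0$) is a point the paper's proof glosses over, but it does not change the argument.
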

\begin{proof}
Below we begin by expanding the KL-divergence out and using expectation and log properties to decompose it into a function of the conditional KL-Divergences.
    \begin{align*}
        &KL(p^*||p_{ntp}^{i}) = \Evv{y \sim p^*}{-\logp{\frac{p_{ntp}^{i}(y)}{p(y)}}} = \sum_{j=1}^{L}\Evv{y \sim p^*}{-\logp{\frac{p_{ntp}^{i}(y_j\mid y_{[j-1]})}{p(y_j\mid y_{[j-1]})}}} =\\
        &\sum_{j=1}^{L}\sum_{y \in \mathcal{Y}}-p^*(y)\logp{\frac{p_{ntp}^{i}(y_j\mid y_{[j-1]})}{p(y_j\mid y_{[j-1]})}} =\\
        &\sum_{j=1}^{L}\sum_{y \in \mathcal{Y}}-p^*(y_{[j-1]})p^*(y_j \mid y_{[j-1]})p^*(y_{j+1:}\mid y_{[j]})\logp{\frac{p_{ntp}^{i}(y_j\mid y_{[j-1]})}{p(y_j\mid y_{[j-1]})}} \tag{$\star$}
    \end{align*}
    Let us create, for each $i \in [L]$:
    \begin{align*}
        Y_{[i]} = \{y_{[i]} \mid y \in \mathcal{Y}\},\\
        Y_{i+1:} = \{y_{i+1:} \mid y \in \mathcal{Y}\}.
    \end{align*}
    We can see the inner sum then becomes:
    \begin{align*}
    &\sum_{y_{[j]} \in \mathcal{Y}_{[j]}}\sum_{y_{j+1:} \in \mathcal{Y}_{j+1:}}-p^*(y_{[j-1]})p^*(y_j \mid y_{[j-1]})p^*(y_{j+1:}\mid y_{[j]})\logp{\frac{p_{ntp}^{i}(y_j\mid y_{[j-1]})}{p(y_j\mid y_{[j-1]})}} =\\
    &\sum_{y_{[j]} \in \mathcal{Y}_{[j]}}-p^*(y_{[j-1]})p^*(y_j \mid y_{[j-1]})\logp{\frac{p_{ntp}^{i}(y_j\mid y_{[j-1]})}{p(y_j\mid y_{[j-1]})}}\sum_{y_{j+1:} \in \mathcal{Y}_{j+1:}}p^*(y_{j+1:}\mid y_{[j]}).
    \end{align*}
    From the definition of $\mathcal{Y}_{j+1:}$, we have that this last sum is $1$. Thus, $(\star)$ becomes: 
    \begin{align*}
        &\sum_{j=1}^{L}\sum_{y_{[j]} \in \mathcal{Y}_{[j]}}-p^*(y_{[j-1]})p^*(y_j \mid y_{[j-1]})\logp{\frac{p_{ntp}^{i}(y_j\mid y_{[j-1]})}{p(y_j\mid y_{[j-1]})}} =\\
        &\sum_{j=1}^{L}\sum_{y_{[j-1]} \in \mathcal{Y}_{[j-1]}}\sum_{v \in \mathcal{V}}-p^*(y_{[j-1]})p^*(v \mid y_{[j-1]})\logp{\frac{p_{ntp}^{i}(v\mid y_{[j-1]})}{p(v\mid y_{[j-1]})}} =\\ &\sum_{j=1}^{L}\sum_{y_{[j-1]} \in \mathcal{Y}_{[j-1]}}p^*(y_{[j-1]})\sum_{v \in \mathcal{V}}-p^*(v \mid y_{[j-1]})\logp{\frac{p_{ntp}^{i}(v\mid y_{[j-1]})}{p(v\mid y_{[j-1]})}} =\\
        &\sum_{j=1}^{L}\sum_{y_{[j-1]} \in \mathcal{Y}_{[j-1]}}p^*(y_{[j-1]})\Evv{v \sim p^*(\cdot \mid y_{[j-1]})}{-\logp{\frac{p_{ntp}^{i}(v\mid y_{[j-1]})}{p(v\mid y_{[j-1]})}}}.
    \end{align*}
    By assumption, we have that 
    \[\Evv{v \sim p^*(\cdot \mid y_{[j-1]})}{-\logp{\frac{p_{ntp}^{i}(v\mid y_{[j-1]})}{p(v\mid y_{[j-1]})}}} \rightarrow 0\]
    for each one. Thus, since there are a finite number of terms, standard arguments show that that the entire function will limit to $0$.
\end{proof}
\subsection{Proof of Proposition \ref{convergence prop}}\label{proof convergence prop}
\begin{proof}
\begin{align*}
    &\Evv{x \sim p_x, y \sim p^*|x, \hat{y} \sim p_{\mathcal{D}( p_{ntp}^{i})\mid x}}{\ell(\hat{y}, y)} - \Evv{x \sim p_x, y \sim p^*|x, \hat{y} \sim p_{\mathcal{D}( p_{ntp}^{*})\mid x}}{\ell(\hat{y}, y)} = \\
    &\Evv{x \sim p_x, y \sim p^*|x}{\sum_{\hat{y} \in \mathcal{Y}}\p{p_{\mathcal{D}( p_{ntp}^{i})\mid x}(\hat{y}) - p_{\mathcal{D}( p_{ntp}^{*})\mid x}(\hat{y})}\ell(\hat{y}, y)}  = \\
    &\Evv{x \sim p_x}{\sum_{\hat{y} \in \mathcal{Y}}\p{p_{\mathcal{D}( p_{ntp}^{i})\mid x}(\hat{y}) - p_{\mathcal{D}( p_{ntp}^{*})\mid x}(\hat{y})}\Evv{y \sim p^*|x}{\ell(\hat{y}, y)}} \leq \\
    &M\Evv{x \sim p_x}{\sum_{\hat{y} \in \mathcal{Y}}\p{p_{\mathcal{D}( p_{ntp}^{i})\mid x}(\hat{y}) - p_{\mathcal{D}( p_{ntp}^{*})\mid x}(\hat{y})}} = M\sum_{\hat{y} \in \mathcal{Y}}\Evv{x \sim p_x}{p_{\mathcal{D}( p_{ntp}^{i})\mid x}(\hat{y}) - p_{\mathcal{D}( p_{ntp}^{*})\mid x}(\hat{y})} 
    \end{align*}
    Now, by assumption we have 
    \[p_{\mathcal{D}( p_{ntp}^{i})\mid x}(\hat{y}) - p_{\mathcal{D}( p_{ntp}^{*})\mid x}(\hat{y})\rightarrow 0\]
    and notice that $|p_{\mathcal{D}( p_{ntp}^{i})\mid x}(\hat{y}) - p_{\mathcal{D}( p_{ntp}^{*})\mid x}(\hat{y})| \leq 1$. Thus, by the dominated convergence theorem, we have
    \[\lim_{i \rightarrow \infty}\Evv{x \sim p_x}{p_{\mathcal{D}( p_{ntp}^{i})\mid x}(\hat{y}) - p_{\mathcal{D}( p_{ntp}^{*})\mid x}(\hat{y})} = \Evv{x \sim p_x}{\lim_{i\rightarrow \infty}\p{p_{\mathcal{D}( p_{ntp}^{i})\mid x}(\hat{y}) - p_{\mathcal{D}( p_{ntp}^{*})\mid x}(\hat{y})}} = 0\]
    Therefore,
    \[\lim_{i\rightarrow \infty}M\sum_{\hat{y} \in \mathcal{Y}}\Evv{x \sim p_x}{p_{\mathcal{D}( p_{ntp}^{i})\mid x}(\hat{y}) - p_{\mathcal{D}( p_{ntp}^{*})\mid x}(\hat{y})} = 0\]
    and thus 
    \[\lim_{i\rightarrow \infty}\Evv{x \sim p_x, y \sim p^*|x, \hat{y} \sim p_{\mathcal{D}( p_{ntp}^{i})\mid x}}{\ell(\hat{y}, y)} - \Evv{x \sim p_x, y \sim p^*|x, \hat{y} \sim p_{\mathcal{D}( p_{ntp}^{*})\mid x}}{\ell(\hat{y}, y)} \leq 0\]
    Notice we can use the same technique to show that 
    \[\lim_{i\rightarrow \infty}\Evv{x \sim p_x, y \sim p^*|x, \hat{y} \sim p_{\mathcal{D}( p_{ntp}^{*})\mid x}}{\ell(\hat{y}, y)} - \Evv{x \sim p_x, y \sim p^*|x, \hat{y} \sim p_{\mathcal{D}( p_{ntp}^{i})\mid x}}{\ell(\hat{y}, y)} \leq 0\]
    Thus, by multiplying the above by $-1$, we get:
    \[\lim_{i\rightarrow \infty}\Evv{x \sim p_x, y \sim p^*|x, \hat{y} \sim p_{\mathcal{D}( p_{ntp}^{i})\mid x}}{\ell(\hat{y}, y)} - \Evv{x \sim p_x, y \sim p^*|x, \hat{y} \sim p_{\mathcal{D}( p_{ntp}^{*})\mid x}}{\ell(\hat{y}, y)} \geq 0\]
    Therefore, we have
    \[0 \leq \lim_{i\rightarrow \infty}\Evv{x \sim p_x, y \sim p^*|x, \hat{y} \sim p_{\mathcal{D}( p_{ntp}^{i})\mid x}}{\ell(\hat{y}, y)} - \Evv{x \sim p_x, y \sim p^*|x, \hat{y} \sim p_{\mathcal{D}( p_{ntp}^{*})\mid x}}{\ell(\hat{y}, y)} \leq 0\]
    and thus:
    \[\lim_{i\rightarrow \infty}\Evv{x \sim p_x, y \sim p^*|x, \hat{y} \sim p_{\mathcal{D}( p_{ntp}^{i})\mid x}}{\ell(\hat{y}, y)} - \Evv{x \sim p_x, y \sim p^*|x, \hat{y} \sim p_{\mathcal{D}( p_{ntp}^{*})\mid x}}{\ell(\hat{y}, y)} = 0\]
    which is what we needed to show.
\end{proof}

\subsection{Proof of Lemma \ref{optimal N-gram}}\label{proof optimal N-gram}
\begin{proof}
    Let $\hat{y}$ be the output of our algorithm. By linearity of expectation, we have 
    \[\Ev{\sum_{i=1}^{L-N+1} \mathds{1}_{\left\{\hat{y}_{i:i+N-1} \neq y_{i:i+N-1}\right\}}} = \sum_{i=1}^{L-N+1} \Ev{\mathds{1}_{\left\{\hat{y}_{i:i+N-1} \neq y_{i:i+N-1}\right\}}}\]
    We can see that 
    \[\Ev{\mathds{1}_{\left\{\hat{y}_{i:i+N-1} \neq y_{i:i+N-1}\right\}}} = 1-p(\hat{y}_{i:i+N-1})\]
    Therefore, we have
    \begin{align*}
    &\Ev{\sum_{i=1}^{L-N+1} \mathds{1}_{\left\{\hat{y}_{i:i+N-1} \neq y_{i:i+N-1}\right\}}} = \sum_{i=1}^{L-N+1}1-p(\hat{y}_{i:i+N-1}) =\\
    &L-N+1 - \sum_{i=1}^{L-N+1}p(\hat{y}_{i:i+N-1}) = L-N+1 - g(\hat{y})
    \end{align*}
    Therefore, maximizing $g(y)$ will minimize our expected risk.
\end{proof}
\subsection{Proof of Theorem \ref{exptime}} \label{exptime proof}
\begin{proof}
    We have that
    \[\forall y \in \mathcal{Y}, \quad p(y) = p(y_1)p(y_2|y_1)\dots p(y_L|y_{[L-1]}).\]
    We can think of this as a path $y_1 \rightarrow y_2 \rightarrow \dots \rightarrow y_{L}$. We can combine all these paths to make a directed tree. Let each node have the weight of the conditional distribution at that node. Thus, if we take the product of any path $y_1 \rightarrow y_2 \rightarrow \dots \rightarrow y_{L}$, we get $p(y)$.
    
    Suppose $p$ is defined as in the Theorem \ref{exptime} and suppose we do not query $|\mathcal{V}|^L-1$ conditional probability values. We note that $|\mathcal{V}|^L-1 = \frac{|\mathcal{V}|-1}{|\mathcal{V}|}\sum_{j=1}^{L}|\mathcal{V}|^j$. We know that the $j^{\text{th}}$ level in our tree has $|\mathcal{V}|^j$ nodes. Therefore, on at least one level, the ratio of nodes we have queried is less than $\frac{|\mathcal{V}|-1}{|\mathcal{V}|}$. Thus, by the pigeonhole principle, there must exist two nodes that have the same path up until that point, $v|y_{[j-1]}$, $u|y_{[j-1]}$, that have not been looked at. Therefore, the algorithm is unable to know the exact probability of any descendants of these two paths. If either of these nodes have a weight more than $1/|\mathcal{V}|$, say without loss of generality it is $v|y_{[j-1]}$, then $g(y_{[j-1]}+v+\dots)$ would be larger than any path $\mathcal{D}$ has found so far. Therefore, the algorithm can not be sure either answer is optimal and must query more. Thus, since the algorithm was arbitrary, on this distribution any algorithm runtime will be at least $C(|\mathcal{V}|^L-1)$.
\end{proof}

\subsection{Proof of Lemma \ref{k-look meets prop 1}}\label{proof lemma k-look meets prop 1}
\begin{proof}
    Let $x \in \mathcal{X}$, $k \in \N$ and let $p^i|x \rightarrow p^*|x$. Then, let \[\epsilon_i = \min\left\{\left|p^i(y_{cK+1}, \dots, y_{cK + K} \mid y_{[cK]},x) -p^*(y_{cK+1}, \dots, y_{cK + K} \mid y_{[cK]}, x)\right| \mid c \in \Z_+\right\}\]
    Notice, in order for $p^i|x \rightarrow p^*|x$, we need $\epsilon_i \rightarrow 0$. Thus, since there are only a finite amount of marginal and conditional values our decoder can look at, and since we know there are no ties, there will be some $j$ such that $\forall i > j$ the $\argmax$ for the conditional distributions of both $p^i$ and $p^*$ will match. Therefore, we meet the assumption needed to use Prop \ref{convergence prop}.
\end{proof}

\subsection{Proof of Theorem \ref{k-lookahead opt theorem}}\label{proof k-lookahead opt theorem}
\begin{proof}
    \begin{itemize}
        \item \textbf{$p^*_x(C) = 1 \implies $ $K_T$-lookahead optimality}: By the defintion of $C$, we know the $K_T$-lookahead outputs maximize $\sum_{i=1}^{L-N+1}p^*(y_{i:i+N-1} \mid x)$ except a set of measure $0$ over $X$. From Lemma \ref{optimal N-gram}, we can see that this is the optimal output. 
        \item \textbf{$K_T$-lookahead optimality $\implies p^*_x(C) = 1$}: We will prove the contraposition. Suppose $p^*_x(C) < 1$. Then, there exists a set $L \in \mathcal{X}$ of measure $> 0 $ where for each $x \in L$ :
        \[\argmax_y \sum_{i=1}^{L-N+1}p^*(y_{i:i+N-1} \mid x) = y^\dagger,\]
        but 
        \[y^\dagger \neq \hat{y},\]
        where $\hat{y}$ what our $K_T$-lookahead decoding algorithm outputs. Since we know $y^\dagger$ is optimal, K-lookahead decoding will be unoptimal.
    \end{itemize}
\end{proof}

\subsection{Example of a Markov chain that is not $K_T$-lookahead Optimal}\label{mc}
Let $K$ be reasonably sized. If $K$ is not reasonably sized, we can scale the transition probabilities appropriately to make the following example still work.

Let the below fully connected Markov chain be called $M$. Not all edges are drawn and all non-drawn edges from a node all have the nearly same weight, but slightly perturbed to make sure this Markov chain is in $\mathcal{P}$. Let us also have a slightly perturbed from uniform initial distribution for the same reasons, but have node $1$ be the most likely starting point.

\begin{tikzpicture}
    \node[shape=circle,draw=black] (1) at (0,6) {1};
    \node[shape=circle,draw=black] (2) at (0,4) {2};
    \node[shape=circle,draw=black] (c) at (0,2) {...};
    \node[shape=circle,draw=black] (K) at (0,0) {K};

    \node[shape=circle,draw=black] (A) at (4,4) {A};
    \node[shape=circle,draw=black] (B) at (4,2) {B};

     \path [->] (1) edge node[left] {$.99$} (2);
     \path [->] (2) edge node[left] {$.99$} (c);
     \path [->] (c) edge node[left] {$.99$} (K);
     \path [->] (K) edge [bend left=40] node[left] {$\frac{1}{K+2}+.01$} (1);
    \path [->] (K) edge [bend left=20] node[left] {$\frac{1.01}{K+2}$} (A);
    \path [->] (K) edge [bend right=20]  node[left] {$\frac{1.02}{K+2}$} (B);
     
     \path [->] (A) edge[bend right=30] node[left] {$.985$} (B);
     \path [->] (B) edge[bend right=30] node[right] {$.98$} (A);
     


\end{tikzpicture}

Suppose $L=2K$. We can see that the $K_T$-lookahead decoder will choose the $1\dots K1\dots K$ repetitively. However, we can see that the optimal choice will have $A$ or $B$ somewhere in the output. This is due to, when getting to $K$, we have no good high probability options. Thus, since $A$ and $B$ create a high probability loop, at some point they will take over in the marginal distributions.



\subsection{Proof of Proposition \ref{not monotone prop}}\label{not monotone}
\begin{proof}
Let $N < L$. Then, let us have the following probability distribution:
\begin{align*}
   &p(0\dots0) = .28, \quad p(\underbrace{0\dots0}_{K_2-1 \text{ indices}}10\dots0) = .12, \quad p(20\dots 0) =.23, \quad p(11\dots 1) =.37.
\end{align*}
One can verify that the $K_{1_{T_1}}$-lookahead decoder would output $0\dots0$ and the $K_{2_{T_2}}$-lookahead decoder would output $1\dots1$. One can also verify that for any N-gram starting at position $c>1$, 
\[\underbrace{0\dots0}_{N\text{ indices}} = \arg\max_{y_{c:c+N-1}}p(y_{c:c+N-1})\]
since $p(0_j\dots0_{j+N-1}) \geq .28 + .23 = .51$ where $j > 1$. From this, by calculating the conditional and marginal distributions for the first $N$-gram, one can see that $\argmax_{y \in \mathcal{Y}}\{g(y)\} = 0\dots0$.
Therefore, the $K_{1_{T_1}}$-lookahead decoder is optimal for the n-gram Hamming loss, while the $K_{2_{T_2}}$-lookahead decoder is not.

For $N=L$, we have the $0-1$ loss, whose optimal output is the max probability sequence. let us have the following probability distribution:

\begin{align*}
   &p(0\dots0) = .408, \quad p(\underbrace{0\dots0}_{K_{2}-1 \text{ indices}}11\dots1) = .102,\\
   &\quad p(\underbrace{1\dots1}_{K_{2} \text{ indices}}00\dots0) =.2401, \quad p(11\dots 1) =.2499.
\end{align*}
Here, we can see that the $K_{1_{T_1}}$-lookahead decoder will output $0\dots0$, however, since the max marginal for the first $K_{2}$ is $1\dots1$, the $K_{2_{T_2}}$-lookahead decoder will not output $0\dots0$.

Thus, we have covered all cases and have shown what was need.
\end{proof}

\subsection{Proof of Proposition \ref{not monotone T prop} and monotonicity result}\label{not monotone T}
\begin{proof}
Now, let $N,K,L, T$ be as stated in Proposition \ref{not monotone T prop}. We will constructively create a two counterexamples, one for when $N < L$ and another for when $N=L$.  Let our alphabet be $\{0,1,2\}$. For the $N<L$ case, we have the following probability distribution:

\begin{align*}
   &p(0\dots0) = .27675, \quad p(10\dots0) = .25, \quad p(\underbrace{0\dots0}_{K \text{ indices}}20\dots0)= .03075,\\
   &p(\underbrace{0\dots0}_{T \text{ indices}}1\dots1)= .2925, \quad p(\underbrace{0\dots0}_{T \text{ indices}}20\dots0)= .15.
\end{align*}
It is easy to see that both $K_{T}$ and $K_{T+1}$ will both choose $0\dots0$ for their first $T$ and $T+1$ values respectively. From this, we can see that this locks in $T+1$ into choosing rather $0\dots0$ or $\underbrace{0\dots0}_{T \text{ indices}}20\dots0$, from which one can see it will choose $0\dots0$ by following the algorithm. For $K_T$, it sees the following for its second iteration:
\begin{align*}
    &p(\underbrace{1\dots1}_{K \text{ indices}} \mid \underbrace{0\dots0}_{T \text{ indices}}) = .39, \quad p(\underbrace{0\dots0}_{K-T \text{ indices }}\underbrace{20\dots0}_{T \text{ indices}} \mid \underbrace{0\dots0}_{T \text{ indices}}) = .041, \quad p(\underbrace{0\dots0}_{K \text{ indices}} \mid \underbrace{0\dots0}_{T \text{ indices}}) = .369,\\
    &p(\underbrace{20\dots0}_{K \text{ indices}} \mid \underbrace{0\dots0}_{T \text{ indices}}) = .2.
\end{align*}
From this, we can see that it will choose $\underbrace{1\dots1}_{T \text{ indices}}$ and then be locked into the sequence $\underbrace{0\dots0}_{T \text{ indices}}1\dots1$. 
Now that we know both of the outputs of $K_T$ and $K_{T+1}$, we need to show that $0\dots0$ is optimal. Notice that for any N-gram starting at position $c > 1$:
\[p(0_c\dots0_{c+N-1}) \geq .27675+.25 = .52675\]
and there are no ties in the $\argmax$. Let us then have a sequence $y$. Let $S_{y}$ be all $N$-grams of $y$ that contain a non-$0$ index. Notice that:
\[\forall y_{j:j+N-1} \in S_{y_{1:}} \quad p(y_{j:j+N-1}) < p(0_j\dots0_{j+N-1}).\]
Thus, for every starting index greater than $1$, our sequence would be better off it was only $0$s. Therefore, we only need to show the same for index $1$. By calculating the marginal and conditional distributions, it can be seen that, for every $N$,
$p(0_1\dots0_N) > p(y_{1:N-1})$ where $y_{1:N-1}$ is any $N$-gram that is not all zeros. Thus $0\dots0$ is optimal.

For $N=L$,  let our alphabet be $\{0,1,2\}$. Now, we will define marginal and conditional probabilities for the first $K$ indices for the probability distribution:
\begin{align*}
    p(\underbrace{0\dots 0}_{T \text{indices}}) = 1, \quad p(\underbrace{0\dots0}_{K-T \text{ indices}} \mid \underbrace{0\dots 0}_{T \text{indices}}) = .51, \quad p(\underbrace{1\dots1}_{K-T \text{ indices}} \mid \underbrace{0\dots 0}_{T \text{indices}}) = .49.
\end{align*}
From this, we can see that $K_{T+1}$ will choose $\underbrace{0...0}_{T+1 \text{ indices}}$ for the first round but $K_{T}$ only chooses $\underbrace{0...0}_{T \text{ indices}}$. The goal now is to adversarially create the rest of the sequence probabilities so that $K_{T}$ and $K_{T+1}$ diverge and $K_{T+1}$ is optimal. Let us now give the full probability distribution:

\begin{align*}
    &p(\underbrace{0\dots0}_{K \text{ indices}}2\dots2) = .051, \quad p(0\dots0) = .459,\\
    &p(\underbrace{0\dots0}_{T \text{ indices}}1\dots1) = .2499, \quad p(\underbrace{0\dots0}_{T \text{ indices }}\underbrace{1\dots1}_{K \text{ indices}}0\dots0) = .2401.
\end{align*}
We note that since $K < L-T \implies K+T < L$, the last two sequences above are distinct (i.e., there is at least one $0$ at the end of the last sequence).
Notice how we have created two paths that diverge at the $T + 1$ spot depending on if the $T + 1$ spot is a $0$ or a $1$. On the second iteration, $K_{T}$ will see that $p(\underbrace{1\dots1}_{K \text{ indices}} \mid \underbrace{0\dots0}_{T \text{ indices }}) = .49$, while any other choices would have less probability than that, thus we have that $K_{T}$ will choose $1$ at the $T + 1$ spot. Since $K_{T + 1}$ already chose a $0$ at that spot, their paths have split. Specifically, we can see that $K_{T}$ will choose $\underbrace{0\dots0}_{T \text{ indices}}1\dots1 = \hat{y}$ and $K_{T + 1}$ will choose $0\dots0 = y^\dagger$. 
Since $N=L$, we know the optimal sequence is the one with the most probability, which is $0\dots0$, which shows what we needed.
\end{proof}
For the monotonicity result, let $K \in \{2,3\dots, L\}$, $N=L$, $T_1,T_2 \in [K]$ such that $T_1 < T_2$. Suppose also $K \geq L -T_1$. $K_{T_1}$ and $K_{T_2}$ are looking over the same $K$ tokens in the first iteration, thus their first $T_1$ values will be the same. Then, since $K \geq L -T_1$, we know $K_{T_1}$ will choose the optimal rest of the tokens since it looks over every possibility left. Therefore, we only need to know if its first $T_1$ tokens were optimal. Since $K_{T_2}$ is optimal, and they share the same first $T_1$ tokens, we then know that $K_{T_1}$ is optimal.

\subsection{Proof of Proposition \ref{sto not opt}}\label{sto not opt proof}
\begin{proof}
    Let $p$ be our probability distribution over $\mathcal{X} \times \mathcal{Y}$ and let $\mathcal{D}$ be our decoding algorithm. By lemma \ref{optimal N-gram}, given an input $x$, the optimal output is $\argmax_y g(y|x)$. 
    Notice:
    \begin{align*}
        &\Evv{y \sim p|x, \hat{y} \sim p_{\mathcal{D}( p_{ntp})\mid x}}{\sum_{i=1}^{L-N+1}\mathds{1}_{\{y_{i:i+N-1} \neq \hat{y}_{i:i+N-1}\}}} =\\
        &\sum_{\hat{y} \in \mathcal{Y}}p_{\mathcal{D}( p_{ntp})\mid x}(\hat{y})\Evv{y \sim p|x}{\sum_{i=1}^{L-N+1}\mathds{1}_{\{y_{i:i+N-1} \neq \hat{y}_{i:i+N-1}\}}}.
    \end{align*}
    We know that $\sum_{\hat{y} \in \mathcal{Y}}p_{\mathcal{D}( p_{ntp})\mid x}(\hat{y}) = 1$. Therefore, in order to minimize our total sum, we need all the mass of $p_{\mathcal{D}( p_{ntp})\mid x}(\hat{y})$ to be on values of $\hat{y}$ which minimize our expected risk. Since this was for an arbitrary $x \in \mathcal{X}$, we have shown what was needed.
\end{proof}

\subsection{Random sampling and temperature scaled random sampling meet the assumption needed for proposition \ref{convergence prop}}\label{rs and tsrs sto opt}
Let us look at one particular $y$. Let $p_{RS(p\mid x)}(\cdot)$ be the probability distribution of random sampling decoder using $p$ as a next-token predictor given an input $x$ and $p_{TSRS(p \mid x, \gamma)}(\cdot)$ be the same for temperature scaled random sampling with hyperparameter $\gamma$. By defintion, we have for every $x \in \mathcal{X}$ and $y \in \mathcal{Y}$:
\begin{align*}
    &p_{RS(p\mid x)}(y) = \prod_{i=1}^{L}p(y_i\mid y_{[i-1]},x)\\
    &p_{TSRS(p\mid x, \gamma)}(y) = \prod_{i=1}^{L}\frac{p(y_i\mid y_{[i-1]},x)^\gamma}{\sum_{v \in \mathcal{V}}p(v\mid y_{[i-1]},x)^\gamma}.
\end{align*}
We can see that each of these are continuous in $p(\cdot \mid \cdot)$ so long as $\gamma \neq \infty$. Thus, as $p^i \rightarrow p^*$, we have that 
\[p_{RS(p^i\mid x)}(y) \rightarrow p_{RS(p^*\mid x)}(y)\]
and
\[p_{TSRS(p^i\mid x, \gamma)}(y) \rightarrow p_{TSRS(p^*\mid x, \gamma)}(y).\]
If $\gamma = \infty$, then this becomes greedy decoding, which we show in Lemma \ref{k-look meets prop 1} meets the assumption needed as well.
\subsection{Proof of Proposition \ref{rs opt}}\label{proof rs opt}
\begin{proof}
By the probability chain rule, we can see that random sampling from $p^i_{ntp}(\cdot \mid \cdot)$ and then concatenating has the same distribution as sampling from $p^i$ itself. Therefore, we will work with $p^i$ for the rest of the proof without regard for next-token prediction. Given that $H(p)$ is the entropy of a probability distribution $p$, we have
    \begin{align*}
        &CE(p^i, p^*) =\\
        &\Evv{y\sim p^*}{-\logp{p^i(y)}} =  \Evv{y\sim p^*}{-\logp{p^i(y)}} + \Evv{y\sim p^*}{-\logp{p^*(y)}} - \Evv{y\sim p^*}{-\logp{p^*(y)}} =\\
        &\Evv{y\sim p^*}{-\logp{\frac{p^i(y)}{p^*(y)}}} +  \Evv{y\sim p^*}{-\logp{p^*(y)}} = KL(p^*||p^i) + H(p^*).
    \end{align*}
    By Assumption \ref{ntp assumption} we have that $p^i \rightarrow p^*$ in KL-Divergence. Since KL-Divergence is also a metric, we have that $CE(p^i, p^*) \geq H(p^*)$. Thus, we can see that $\lim_{i \rightarrow \infty} CE(p^i, p^*) = H(p^*)$, which shows we obtain the minimum value we can and therefore have consistency.
\end{proof}

\subsection{Proof of Proposition \ref{temp scaled not optimal}}\label{temp scaled app not optimal}
\begin{proof}
We know that $p^i \rightarrow p^*$ in KL-divergence. Further, by Appendix \ref{rs and tsrs sto opt}, we can see that for all $y\in\mathcal{Y}$ and $x \in \mathcal{X}$,  $p_{RS(p^i_{ntp} \mid x)}(y) \rightarrow p_{RS(p^*_{ntp} \mid x)}(y)$ and $p_{TSRS(p^i_{ntp} \mid x, \gamma)}(y) \rightarrow p_{TSRS(p^*_{ntp} \mid x,\gamma)}(y)$. Thus, if we can show that $p_{TSRS(p^*_{ntp} \mid x,\gamma)} \neq p_{RS(p^*_{ntp} \mid x)}$, then we are done. Let $p$ be the limit for random sampling and let $p^\gamma$ be the limit for temperature scaled random sampling for temperature parameter $\gamma$.

Let $\gamma \neq 1$. In section \ref{ce rs} we know that random sampling is optimal. In Appendix \ref{proof rs opt} we show the well known fact that cross entropy is the sum of the entropy of the true distribution plus the KL-Divergence of the two distributions. The KL-divergence has a unique minimum at the true distribution. Thus, we will show that $KL(p || p^\gamma) =0$ if and only if $p$ is uniform or deterministic. We begin by using the same analysis done in Appendix \ref{ntp kl div assumption app} to break the KL-Divergence up into a function of the conditional KL-Divergences.

\begin{align*}
    &KL(p || p^\gamma) = \Evv{y\sim p}{-\logp{\frac{p^\gamma(y)}{p(y)}}} = \sum_{i=1}^{L}\Evv{y\sim p}{-\logp{\frac{\frac{p(y_i \mid y_{[i-1]})^\gamma}{\sum_{v \in \mathcal{V}}p(v \mid y_{[i-1]})^\gamma}}{p(y \mid y_{[i-1]})}}} = \\
    &\sum_{i=1}^{L}\Evv{y\sim p}{-\logp{\frac{p(y_i \mid y_{[i-1]})^{\gamma-1}}{\sum_{v \in \mathcal{V}}p(v \mid y_{[i-1]})^\gamma}}} = 
    \sum_{i=1}^{L}\sum_{y \in \mathcal{Y}}-p(y)\logp{\frac{p(y_i \mid y_{[i-1]})^{\gamma-1}}{\sum_{v \in \mathcal{V}}p(v \mid y_{[i-1]})^\gamma}} =\\
    &\sum_{i=1}^{L}\sum_{y \in \mathcal{Y}}-p(y_{[i-1]})p(y_i\mid y_{[i-1]})p(y_{[i+1:]}\mid y_{[i]})\logp{\frac{p(y_i \mid y_{[i-1]})^{\gamma-1}}{\sum_{v \in \mathcal{V}}p(v \mid y_{[i-1]})^\gamma}} =\\
    &\sum_{i=1}^{L}\sum_{y_{[i]} \in \mathcal{Y}_{[i]}}-p(y_{[i-1]})p(y_i\mid y_{[i-1]})\logp{\frac{p(y_i \mid y_{[i-1]})^{\gamma-1}}{\sum_{v \in \mathcal{V}}p(v \mid y_{[i-1]})^\gamma}} =\\
    &\sum_{i=1}^{L}\sum_{y_{[i-1]} \in \mathcal{Y}_{[i-1]}}p(y_{[i-1]})\sum_{y_i \in \mathcal{V}}-p(y_i\mid y_{[i-1]})\logp{\frac{p(y_i \mid y_{[i-1]})^{\gamma-1}}{\sum_{v \in \mathcal{V}}p(v \mid y_{[i-1]})^\gamma}} =\\
    &\sum_{i=1}^{L}\sum_{y_{[i-1] \in \mathcal{Y}_{[i-1]}}}p(y_{[i-1]})\Evv{y_i\sim p(\cdot \mid y_{[i-1]})}{-\logp{\frac{p(y_i \mid y_{[i-1]})^{\gamma-1}}{\sum_{v \in \mathcal{V}}p(v \mid y_{[i-1]})^\gamma}}} =\\
    &\sum_{i=1}^{L}\sum_{y_{[i-1] \in \mathcal{Y}_{[i-1]}}}p(y_{[i-1]})\Evv{y_i\sim p(\cdot \mid y_{[i-1]})}{-\logp{\frac{\frac{p(y_i \mid y_{[i-1]})^{\gamma}}{\sum_{v \in \mathcal{V}}p(v \mid y_{[i-1]})^\gamma}}{p(y_i \mid y_{[i-1]})}}}.
\end{align*}

Thus, we can see that $KL(p || p^\gamma)$ is a function of the KL-divergence of the conditional probability distributions. Since we need $KL(p || p^\gamma) = 0$, this would then make us need each conditional KL-divergence also need to be $0$. Thus, we require for every $y_{[i-1]} \in \mathcal{Y}_{[i-1]}$

\[\forall v \in \mathcal{V} \quad \frac{p(v \mid y_{[i-1]})^\gamma}{\sum_{v \in \mathcal{V}}p(v \mid y_{[i-1]})^\gamma} = p(v \mid y_{[i-1]}).\]

But this would imply for every $v_s, v_r \in \mathcal{V}$ and for every $y_{[i-1]} \in \mathcal{Y}_{[i-1]}$ we have
\[\frac{p(v_s \mid y_{[i-1]})}{p(v_r \mid y_{[i-1]})} = \frac{\frac{p(v_s \mid y_{[i-1]})^\gamma}{\sum_{v_j \in \mathcal{V}} p(v_j \mid y_{[i-1]})^\gamma}}{\frac{p(v_r \mid y_{[i-1]})^\gamma}{\sum_{v_j \in \mathcal{V}} p(v_j \mid y_{[i-1]})^\gamma}} = \p{\frac{p(v_s \mid y_{[i-1]})}{p(v_r \mid y_{[i-1]})}}^\gamma.\]

This only happens when $\gamma={1}$ or when $\frac{p(v_s \mid y_{[i-1]})}{p(v_r \mid y_{[i-1]})} \in \{0,1, \infty\}$. The latter of which, when seeing this needs to happen for every $v_s,v_r,$ and $y_{[i-1]}$, would imply the distribution is a uniform distribution or a deterministic distribution.
\end{proof}

\subsection{Proof of Proposition \ref{temp scaled asymptotics}}\label{temp scaled asymptotics app}
\begin{proof}
    By log properties and linearity of expectation:
\begin{align*}
    &\Evv{y \sim p}{-\logp{\prod_{i=1}^{L} \frac{p(y_i\mid y_{[i-1]})^\gamma}{\sum_{y_j \in \mathcal{V}}p(y_j\mid y_{[i-1]})^\gamma}}} = \Evv{y \sim p}{-\sum_{i=1}^{L}\logp{ \frac{p(y_i\mid y_{[i-1]})^\gamma}{\sum_{y_j \in \mathcal{V}}p(y_j\mid y_{[i-1]})^\gamma}}} =\\
    &\sum_{i=1}^{L}\Evv{y \sim p}{-\logp{ \frac{p(y_i\mid y_{[i-1]})^\gamma}{\sum_{y_j \in \mathcal{V}}p(y_j\mid y_{[i-1]})^\gamma}}} \tag{$1$}
\end{align*}
We will only look at one of these expectations in the sum. Choose  $j \in [L]$. Then:

\begin{align*}
    &\Evv{y \sim p}{-\logp{ \frac{p(y_i\mid y_{[i-1]})^\gamma}{\sum_{y_j \in \mathcal{V}}p(y_j\mid y_{[i-1]})^\gamma}}} =\\
    &\Evv{y \sim p}{-\gamma \logp{p(y_i\mid y_{[i-1]})}+\logp{\sum_{y_j \in \mathcal{V}}p(y_j\mid y_{[i-1]})^\gamma }}  \tag{$\star$}
\end{align*}
Now, we have the following inequalities:
\begin{align*}
    &\logp{\sum_{y_j \in \mathcal{V}}p(y_j\mid y_{[i-1]})^\gamma} \leq \logp{|V|\max_{y_j \in \mathcal{V}}\left\{p(y_j\mid y_{[i-1]})^\gamma\right\}} = \logp{|V|} + \gamma \max_{y_j \in \mathcal{V}}\{\logp{p(y_j\mid y_{[i-1]})}\}\\
    &\logp{\sum_{y_j \in \mathcal{V}}p(y_j\mid y_{[i-1]})^\gamma} \geq \logp{|V|\min_{y_j \in \mathcal{V}}\left\{p(y_j\mid y_{[i-1]})^\gamma\right\}} = \logp{|V|} + \gamma \min_{y_j \in \mathcal{V}}\{\logp{p(y_j\mid y_{[i-1]})}\}\\
    &\logp{\sum_{y_j \in \mathcal{V}}p(y_j\mid y_{[i-1]})^\gamma} \geq \logp{\max_{y_j \in \mathcal{V}}\left\{p(y_j\mid y_{[i-1]})^\gamma\right\}} =\gamma \max_{y_j \in \mathcal{V}}\{\logp{p(y_j\mid y_{[i-1]})}\}
\end{align*}
Using these, notice:
\begin{align*}
    &(\star) \leq \Evv{y \sim p}{-\gamma \logp{p(y_i\mid y_{[i-1]})}+\logp{|V|} + \gamma \max_{y_j \in \mathcal{V}}\{\logp{p(y_j\mid y_{[i-1]})}\} } = \gamma C_1 + \log(|V|) \tag{$2$}\\
\end{align*}
where $C_{1,i} \in \Z_+$ is a constant that only depends on $p$.

For the lower bound, by just substituting the other inequalities in we get:
\begin{align*}
    &(*) \geq - \gamma C_{2,i} + \log\p{|V|} \tag{3}\\
    &(*) \geq \gamma C_{3,i} \tag{4}
\end{align*}
where $C_{2,i}, C_{3,i} \in \Z_+$ are constants that only depend on $p$.

Substituting these inequalities back into $(1)$ will give us what we wanted to show.
\end{proof}
We assumed $\mathcal{Y} = \mathcal{V}^L$ to allow us to use the middle inequality of the three. If we do not asumme this, then it is possible $\min_{y_j \in \mathcal{V}}\{p(y_j\mid y_{[i-1]})^\gamma\} = 0$. To then use this inequality, we would need $|V|$ to be replaced with the amount of tokens with a non-zero probability, $|V_{y_{[i-1]}}|$. This would then require taking the expectation over $\log(|V_{y_{[i-1]}}|)$ to get our bounds. We could also get a matching upper bound by doing the same with the upper bound.

\end{document}